\newcommand{\deterministic}[1]{#1}
\newcommand{\random}[1]{\mathscr{#1}}
\newcommand{\father}[1]{#1_F}
\newcommand{\randomEdge}{\gamma}
\begin{document}

\begin{frontmatter}
\title{Meaningful Clustered Forest: an Automatic and Robust Clustering Algorithm}
\runtitle{Meaningful Clustered Forest}

\begin{aug}
  \author{
    Mariano Tepper
    \ead[label=e1]{mtepper@dc.uba.ar}
  }

  \address{
    Departamento de Computaci\'on, Facultad de Ciencias Exactas y Naturales, Universidad de Buenos Aires\\
    \printead{e1}
  }

  \author{
    Pablo Mus\'e
    \ead[label=e3]{pmuse@fing.edu.uy}%
  }

  \address{
    Instituto de Ingenier\'ia El\'ectrica, Facultad de Ingenier\'ia, Universidad de la Rep\'ublica,\\
    \printead{e3}
  }

  \author{
    Andr\'es Almansa
    \ead[label=e4]{andres.almansa@telecom-paristech.fr}%
  }

  \address{
    CNRS - LTCI UMR5141, Telecom ParisTech\\
    \printead{e4}
  }

  \runauthor{M. Tepper et al.}

\end{aug}

\begin{abstract}
We propose a new clustering technique that can be regarded as a numerical method to compute the proximity gestalt. The method analyzes edge length statistics in the MST of the dataset and provides an a contrario cluster detection criterion. The approach is fully parametric on the chosen distance and can detect arbitrarily shaped clusters. The method is also automatic, in the sense that only a single parameter is left to the user. This parameter has an intuitive interpretation as it controls the expected number of false detections.
We show that the iterative application of our method can (1) provide robustness to noise and (2) solve a masking phenomenon in which a highly populated and salient cluster dominates the scene and inhibits the detection of less-populated, but still salient, clusters.
\end{abstract}

\begin{keyword}
\kwd{clustering}
\kwd{minimum spanning tree}
\kwd{a contrario detection}
\end{keyword}

\end{frontmatter}

\maketitle

\section{Introduction}

Clustering is an unsupervised learning method that seeks to group observations into subsets (called clusters) so that, in some sense, intra-cluster observations are more similar than inter-cluster ones. Despite its intuitive simplicity, there is no general agreement on the definition of a cluster. In part this is due to the fact that the notion of cluster cannot be trivially separated from the context. Consequently, in practice different authors provide different definitions, usually derived from the algorithm being used, rather than the opposite.

Unfortunately, the lack of a unified definition makes it difficult to find a unifying clustering theory. A plethora of methods to assess or classify clustering algorithms have been developed, some of them with very interesting results. To cite a few~\cite{kleinberg02,kannan04,carlsson10}. For a broad perspective on clustering techniques, we refer the reader to the excellent overview of clustering methods recently reported by Jain~\cite{jain09}.



Human perception is extremely adapted to group similar visual objects. Based on psychophysical experiments using simple 2D figures, the Gestalt school studied the perceptual organization, and identified a set of rules that govern human perception~\cite{wertheimer38}. Each of these rules focuses on a single quality, or gestalt, many of which have been unveiled over the years.

One of the earlier and most powerful gestalts is proximity, which states that spatial or temporal proximity of elements may be perceived as a single group. Of course, the notion of distance is heavily embedded in the proximity gestalt. This is clearly illustrated in Figure~\ref{fig:proximity}. Two possible distances between the bars $B_1$ and $B_2$ that could be considered are
\begin{align*}
  d_M(B_1, B_2) &= \max_{\substack{p_1 \in B_1 \\ p_2 \in B_2}} ||p_1 - p_2|| \text{,}\\
  d_m(B_1, B_2) &= \min_{\substack{p_1 \in B_1 \\ p_2 \in B_2}} ||p_1 - p_2|| \text{.}
\end{align*}
In this particular example $|| \ldotp ||$ denotes the euclidean norm.
According to distance $d_M$, the bars are exactly at the same distance in both experiments, while according to distance $d_m$ the bars on the right are closer to each other. In this case, the distance $d_m$ seems to be more consistent with our perception.

\begin{figure}
  \centering
  \includegraphics[width=.5\textwidth]{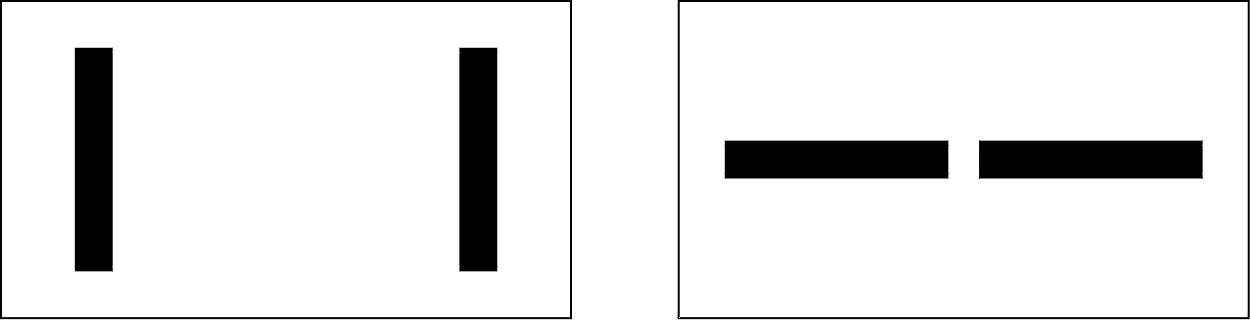}
  
  \caption{Two experiments with black bars. We perceive the bars on the left as more separated than the ones on the right. Nevertheless, there exists distances between sets that cannot capture the difference.}
  \label{fig:proximity}
\end{figure}

The conceptual grounds on which our work is based were laid by Zahn in a seminal paper from 1971~\cite{zahn71}. Zahn faced the problem of finding perceptual clusters according to the proximity gestalt and proposed three key arguments:
\begin{enumerate}
 \item \textbf{Only inter-point distances matter}. This imposes graphs as the only suitable underlying structure for clustering.
 \item \textbf{No random steps}. Results must remain stable for all runs of the detection process. In particular, random initializations are forbidden.
 \item \textbf{Independence from the exploration strategy}. The order in which points are analyzed must not affect the outcome of the algorithm.
\end{enumerate}
These conceptual statements, together with the preference for $d_m$ over $d_M$ or other distances between sets, led Zahn to use the Minimum Spanning Tree (MST) as a building block for clustering algorithms.
(The MST is the tree structure induced by the distance $d_m$~\cite{cormen}.)
Recently, psychophysical experiments performed by Dry~\etal~\cite{dry09} supported this choice. In these experiments individuals were asked to connect points of 30 major star constellations, to show the structure they perceive. Two examples of constellations are shown in Figure~\ref{fig:dryExperiment}. The outcome of these experiments was that, among five relational geometry structures, the MST and the Relative Neighborhood Graph (RNG) exhibit the highest degree of agreement with the empirical edges. In the RNG, two points $p$ and $q$ are connected by an edge whenever there does not exist a third point $r$ that is closer to both $p$ and $q$ than they are to each other. The MST is a subgraph of the RNG. Nonetheless the diagonal variance of both groups might suggest that sometimes other links not present nor in the MST nor in the RNG are used.

From a theoretical point of view, Carlsson and M\'emoli~\cite{carlsson10} proved very recently that the single-link hierarchy (i.e. a hierarchical structure built from the MST, as explained later) is the most stable choice and has good convergence properties among other hierarchical techniques.

\begin{figure}
  \centering{
  \begin{tabular}{cp{.2in}c}
    \begin{minipage}{.45\textwidth}
    \begin{tabular}{@{\hspace{0pt}}c@{\hspace{4pt}}c@{\hspace{0pt}}}
      Cetus & Draco \\
      \includegraphics[width=.49\textwidth]{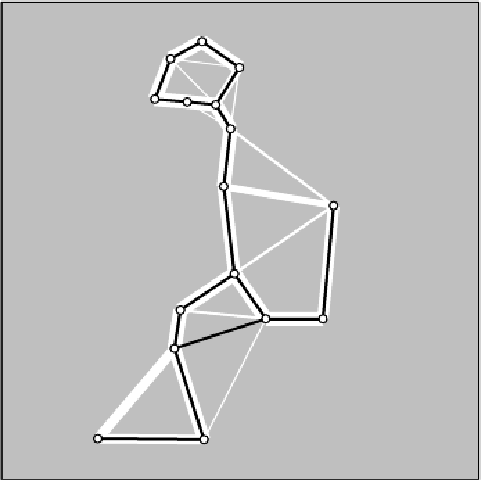} & \includegraphics[width=.49\textwidth]{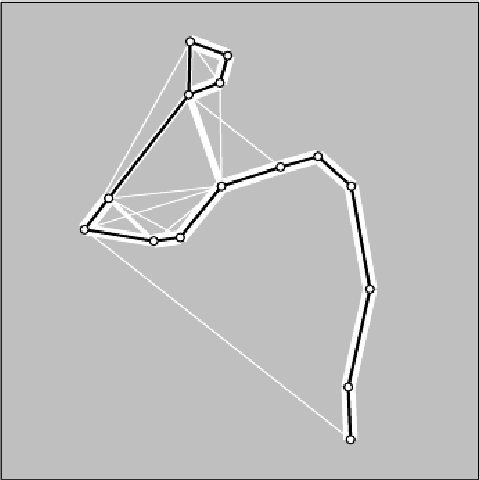}
    \end{tabular}
    \end{minipage} & &
    \begin{minipage}{.35\textwidth}
    \includegraphics[width=\textwidth]{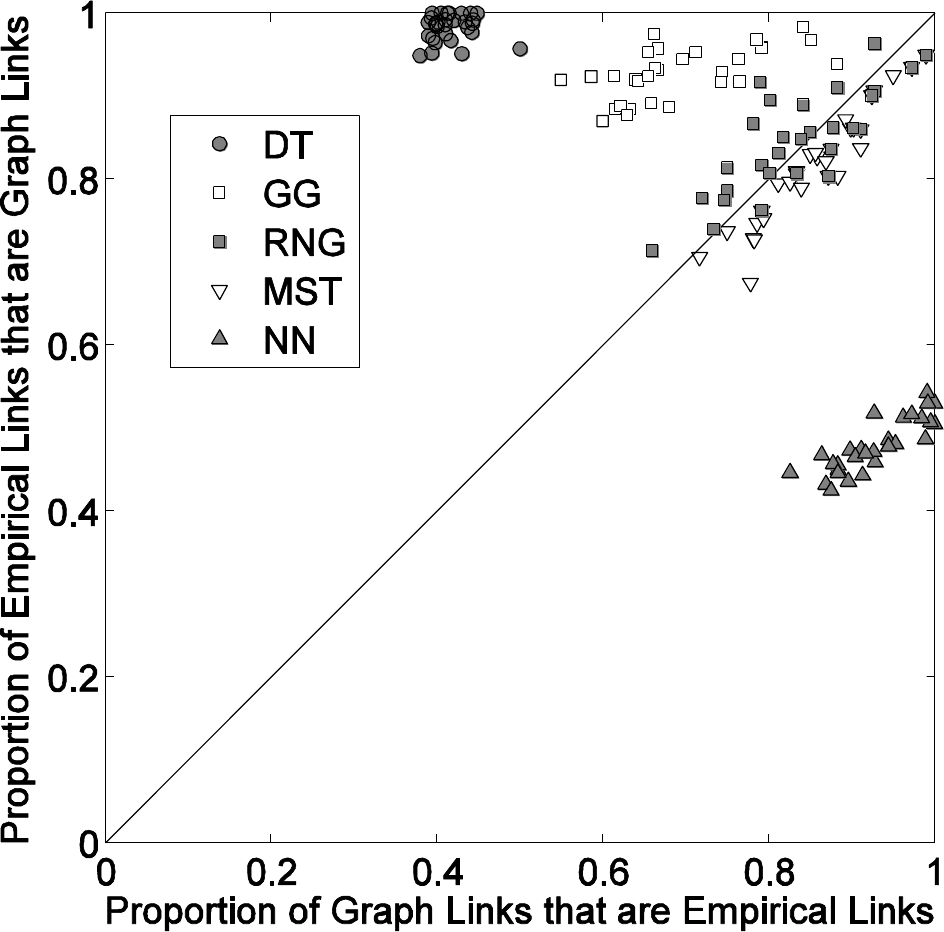}
    \end{minipage}
  \end{tabular}
  }
  \caption{Left and middle: example constellations shown in black and the aggregated empirical structure shown in white. The number of persons that chose an edge is represented by the edge's width. Right: proportional overlap between graph and empirical structure links for Delaunay Triangulation (DT), Gabriel Graph (GG), Relative Neighborhood graph (RNG), Minimum Spanning Tree (MST), and Nearest Neighbors (NN). Each data point represents one of the 30 stimuli. Reproduced from~\cite{dry09}.}
  \label{fig:dryExperiment}
\end{figure}

Zahn~\cite{zahn71} suggested to cluster a feature set by eliminating the inconsistent edges in the minimum spanning tree. That is, instead of constructing a MST and as a consequence of the eliminations, a minimum spanning forest is built.

Since then, variations of the limited neighborhood set approaches have been extensively explored. The criteria in most works are based on local properties of the graph. Since perceptual grouping implies an assessment of local properties versus global properties, exclusively local methods must be discarded or patched. For example, Felzenszwalb and Huttenlocher~\cite{felzenszwalb04} and Bandyopadhyay~\cite{bandyopadhyay04} make use of the MST and RNG respectively. However, in order to correct local observations and to produce a reasonable clustering, they are forced to consider additional ad hoc global criteria.

The computation of the MST requires previous computation of the complete graph. This is a major disadvantage of MST-based clustering methods, that impose severe restrictions both on time and memory. The obvious workaround is to prune a priori the complete graph (e.g. in image segmentation, the image connexity is exploited), but unfortunately it might produce artifacts in the final solution. In a recent work Tepper~\etal~\cite{tepper11mst} proposed an efficient method to compute the MST on metric datasets. The use of this method allows for a significant performance boost over previous MST-based methods (e.g.~\cite{cao06unified, felzenszwalb04}), thus permitting to treat large datasets.

From an algorithmic point of view, the main problem with the Gestalt rules is their qualitative nature. Desolneux~\etal developed a detection theory which seeks to provide a quantitative assessment of gestalts~\cite{desolneux08}. This theory is often referred as Computational Gestalt Theory and it has been successfully applied to numerous gestalts and detection problems~\cite{cao2005, grompone10, rabin09}. It is primarily based on the Helmholtz principle which states that no structure is perceived in white noise. In this approach, there is no need to characterize the elements one wishes to detect but contrarily, the elements one wishes to avoid detecting.

In the light of this framework, Desolneux~\etal analyzed the proximity gestalt, proposing a clustering algorithm~\cite{desolneux08}. It is founded on the idea that clusters are groups of points contained in a relatively small area. In other words, by counting points and computing the area that encloses them, one can assess the exceptionality of a given group of points.

The method proposed by Desolneux~\etal~\cite{desolneux08} suffers from some problems. First, it can only be applied to points in an Euclidean 2D space. Second, in order to compute the enclosing areas, the space has to be discretized a priori and such discretization is used to compute the enclosing areas; of course, different discretizations lead to different results. Last, two phenomena called collateral elimination and faulty union in~\cite{cao06unified} occur when an extremely exceptional cluster hides or masks other less but still exceptional ones. 

Cao~\etal~\cite{cao06unified} continued this line of research extending the clustering algorithm to higher dimensions and corrected the collateral elimination and faulty union issues, by introducing what they called indivisibility criterion. However, as their method is also based on counting points on a given region, it is still required that a set of candidate regions is given a priori. The set of test regions is defined to be a set $\Rect$ of hyper-rectangles parallel to the axes and of different edge lengths, centered at each data point. The choice of $\Rect$ is application specific since it is intrinsically related to cluster size/scale. For example, an exponential choice for the discretization of the rectangle space is made by Cao~\etal~\cite{cao06unified} introducing a bias for small rectangles (since they are more densely sampled). Then each cluster must be fitted by an axis-aligned hyper-rectangle $R \in \Rect$, meaning that clusters with arbitrary shapes are not detected. Even hyper-rectangular but diagonal clusters may be missed or oversegmented. A probability law modeling the number of points that fall in each hyper-rectangle $R \in \Rect$, assuming no specific structure in the data, must be known a priori or estimated. Obviously, this probability depends on the dimension of the space to be clustered.

Recently Tepper~\etal~\cite{tepper11graphClustering} introduced the concept of graph-based a contrario clustering. A key element in this method is that the area can be computed from a weighted graph, where the edge weight represents the distance between two points, using non-parametric density estimation. Since only distances are used, the dimensionality of the problem is reduced to one. However, since this method is conceived for complete graphs, it suffers from a high computational burden.

There is an additional concept behind clustering algorithms that was not stated before: a point, to belong to a cluster, must be similar to all points in the cluster or only to some of them? All the described region-based solutions imply choosing the first option since, in some sense, all distances within a group are inspected. Table~\ref{tab:clusteringCriteria} shows on which side some algorithms are. Since our goal is to detect arbitrarily shaped clusters, we must place ourselves in the second group. We can do this by using the MST.

\begin{table}
  \centering
  \begin{tabular}{p{2in}|p{2in}}
    \multicolumn{2}{c}{a point must be similar} \tabularnewline
    \centering{to all points in the cluster} &
    \centering{to at least one point in the cluster} \tabularnewline
    \hline
    $k$-means & single-link algorithm~\cite{fukunaga90} \tabularnewline
    Cao~\etal~\cite{cao06unified} & Mean Shift~\cite{comaniciu02} \tabularnewline
    Tepper~\etal~\cite{tepper11graphClustering} & Felzenszwalb and Huttenlocher~\cite{felzenszwalb04} \tabularnewline
  \end{tabular}
  \caption{Conceptually there are two different ways to form a cluster. To belong to a cluster a point must be similar to all points in the cluster or to at least one point in the cluster. All algorithms explicitly or implicitly chose one or the other.}
  \label{tab:clusteringCriteria}
\end{table}

Our goal is to design a clustering method that can be considered a quantitative assessment of the proximity gestalt. Hence in Section~\ref{sec:mstClustering} we propose a clustering method based on analyzing the distribution of distances of MST edges. The formulation naturally allows to detect clusters of arbitrary shapes. The use of trees, as minimally connected graphs, also leads to a fast algorithm. The approach is fully automatic in the sense that the user input only relates to the nature of the problem to be treated and not the clustering algorithm itself. Strictly speaking it involves one single parameter that controls the degree of reliability of the detected clusters. However, these methods can be considered parameter-free, as the result is not sensitive to the parameter value. Results on synthetic but challenging sets are presented in Section~\ref{sec:syntheticExamples}.

As the method relies on the sole characterization of non-clustered data, it is thus capable of detecting non-clustered data as such. In other words, in the absence of clustered data, the algorithm yields no detections. In Section~\ref{sec:stabilization} it is also shown that, by iteratively applying our method to datasets with clustered and unclustered data, it is possible to automatically separate both classes.

In Section~\ref{sec:masking}, we finally illustrate a masking phenomena where a highly populated cluster might occlude or mask less populated ones, showing that the iterative application of the MST-based clustering method is able to cope with this issue, thus solving very complicated clustering problems.

Results on three-dimensional examples of the complete process  are presented in Section~\ref{sec:3dPointClouds} and we expose some final remarks in Section~\ref{sec:mstClusteringConclusions}.

\section{A New Clustering Method: Proximal Meaningful Forest}
\label{sec:mstClustering}

We now propose a new method to find clusters in graphs that is independent from their shape and from their dimension. We first build a weighted undirected graph $G = (X, E)$ where $X$ is a set of features in a metric space $(M, d)$ and the weighting function $\omega$ is defined in terms of the corresponding distance function
\begin{equation}
  \omega((v_i, v_j)) = d(x_i, x_j) \textrm{.}
\end{equation}

\subsection{The Minimum Spanning Tree}

Informally, the Minimum Spanning Tree (MST) of an undirected weighted graph is the tree that covers all vertices with minimum total edge cost.

Given a metric space $(M, d)$ and feature set $X \subseteq M$, we denote by $G = (X, E)$ the undirected weighted graph where $E = X \times X$ and the graph's weighting function $\omega : E \rightarrow \R$ is defined as
\begin{equation}
  \omega((x_i, x_j)) = d(x_i, x_j) \quad \forall x_i, x_j \in X \textrm{.}
\end{equation}
The MST $T=(X, E_T)$ of the feature set $X$ is defined as the MST of $G$. A very important and classical property of the MST is that a hierarchy of point groups can be constructed from it.

\begin{notation}
  Let $T=(X, E_T)$ be the minimum spanning tree of $X$. For a group of points $\deterministic{C} \in X$, we denote
  \begin{equation}
    E({\deterministic{C}}) = \{ (v_i, v_j) \ |\ v_i, v_j \in C \land (v_i, v_j) \in E_T \}
  \end{equation}
  The edges in $E(\deterministic{C})$ are sorted in non-decreasing order, i.e.
  \begin{equation*}
    \forall\ e_i, e_j \in E({\deterministic{C}}) ,\ i < j \Rightarrow \omega(e_i) \leq \omega(e_j)
  \end{equation*}
\end{notation}

\begin{definition}
  Let $T=(X, E_T)$ be the minimum spanning tree of $X$. A component $C \subseteq X$ is a set such that the graph $G = (C, E(C))$ is connected and
  \begin{itemize}
    \item $\exists\ v \in V,\  C = \{v\}$ or
    \item $\nexists\ C' \in X,\  C \subset C' \quad \land \quad \omega_{\max}(C) > \omega_{\max}(C')$,
  \end{itemize}
  where $\displaystyle \omega_{\max}(C) = \max_{e \in E(C)} \omega(e)$.
  A single-link hierarchy $\Tree$ is the set of all possible components.
  \label{def:hierarchy}
\end{definition}

It is important to notice what the single-link hierarchy implies: given two components $C_1, C_2 \in \Tree$, it suffices that there exists a pair of vertices, one in $C_1$ and one in $C_2$ that are sufficiently near each other to generate a new component $C_F \in \Tree$, such that $C = C_1 \cup C_2$ and
\begin{equation}
  \omega_{\max} (C_F) = \min_{\substack{v_i \in C_1, v_j \in C_2 \\ (v_i, v_j) \in E_T}} \omega((v_i, v_j)) \text{.}
  \label{eq:maxFatherEqualsMinSiblings}
\end{equation}
An example is depicted in Figure~\ref{fig:mstAndLabels}. The direct consequence of this fact is that the use of the single-link hierarchy for clustering provides a natural way to deal with clusters of different shapes.

\begin{figure*}
  \centering
  \includegraphics[width=4in]{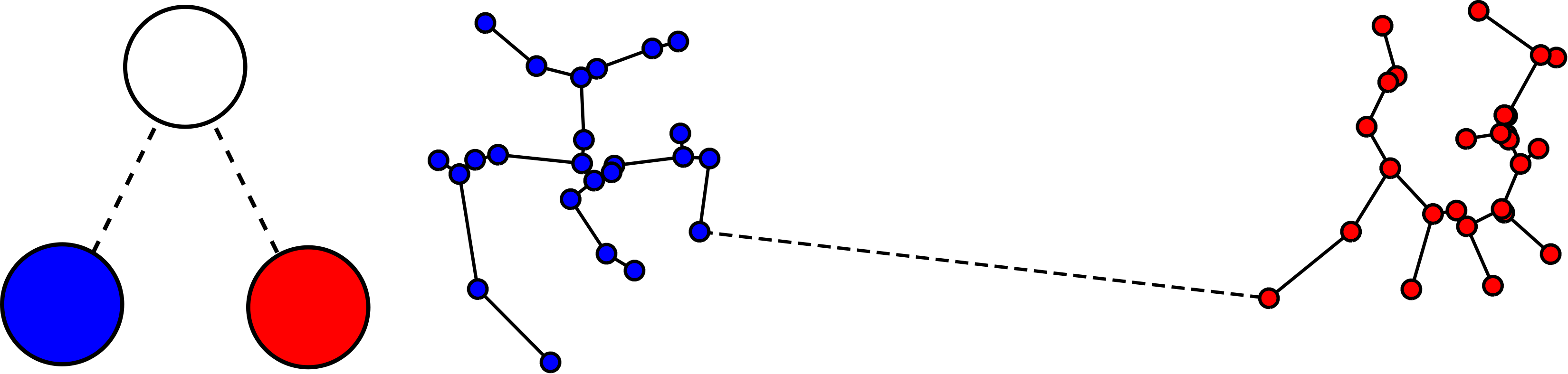}
  \put(-261, 54){$C_F$}
  \put(-283, 10){$C_1$}
  \put(-238, 10){$C_2$}
  \put(-130, 28){$\omega_{\max}(C_F)$}
  \put(-172, 0){$\omega_{\max}(C_1)$}
  \put(-90, 50){$\omega_{\max}(C_2)$}
  \put(-71, 45){\vector(1, -1){22}}
  \put(-174, 2){\vector(-1, 1){25}}
  \caption{Part of a minimal spanning tree. The blue node set and the red node set are linked by the dashed edge, creating a new node in the minimal spanning tree.}
  \label{fig:mstAndLabels}
\end{figure*}

All minimum spanning tree algorithms are greedy. From Definition~\ref{def:hierarchy} and Equation~\ref{eq:maxFatherEqualsMinSiblings}, in the single-link hierarchy the component $C_F = C_1 \cup C_2$ is the father of $C_1$ and $C_2$ and
\begin{align}
  \omega_{\max}(C_F) &\geq \omega_{\max}(C_1) \label{eq:maxCF1} \\
  \omega_{\max}(C_F) &\geq \omega_{\max}(C_2) \label{eq:maxCF2} \textrm{.}
\end{align}
With the objective of finding a suitable partition and to the best of our knowledge, Felzenszwalb and Huttenlocher~\cite{felzenszwalb04} were the first to compare $\omega_{\max}(C_F)$ with $\omega_{\max}(C_1)$ and $\omega_{\max}(C_2)$, with an additional correction factor $\tau$. Components $C_1$ and $C_2$ are only merged if
\begin{equation}
  \min \Big[ \omega_{\max}(C_1) + \tau(C_1) ,\ \omega_{\max}(C_2) + \tau(C_2) \Big] \geq \omega_{\max}(C_F) \text{.}
  \label{eq:felzenszwalbCriteria}
\end{equation}
In practice $\tau$ is defined as $\tau(C) = s/|C|$ where $s$ plays the role of a scale parameter. The above definition presents a few problems. First, $\tau$ (i.e. $s$) is a global parameter and experiments show that clusters with different sizes and densities might not be recovered with this approach (Figure~\ref{fig:expDotsPedro}). Second, there is not an easy rule to fix $\tau$ or $s$ and, although it can be related with a scale parameter, there is no way to predict which specific value is best suited to a particular problem.

The exploration of similar ideas, while bearing in mind their shortcomings, leads us to a new clustering method.

\subsection{Proximal Meaningful Forest}

First, let us observe that the edge length distribution of an MST of a configuration of clustered points differs significantly from that of an unclustered point set (Figure~\ref{fig:expEdgesMST}). As a general idea, by knowing how to model unclustered data, one could detect clustered data by measuring some kind of dissimilarity between both.

In the a contrario framework, there is no need to characterize the elements one wishes to detect but contrarily the elements one wishes to avoid detecting (i.e. unclustered data).
To achieve such characterization we only need two elements: (1) a naive model and (2) a measurement made on structures to be potentially detected. The naive model is a probabilistic model that describes typical configurations where no structure is present and we will describe it in detail in Section~\ref{sec:backgroundModel}. We now focus on establishing a coherent measurement for MST-based clustering.

\begin{figure*}
  \centerline{
    \includegraphics[width=.24\textwidth]{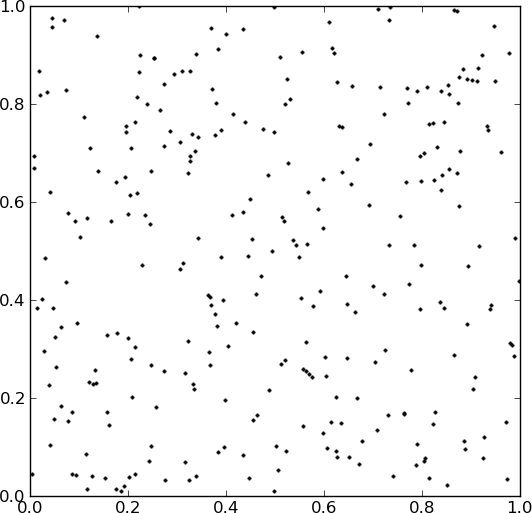}
    \includegraphics[width=.24\textwidth]{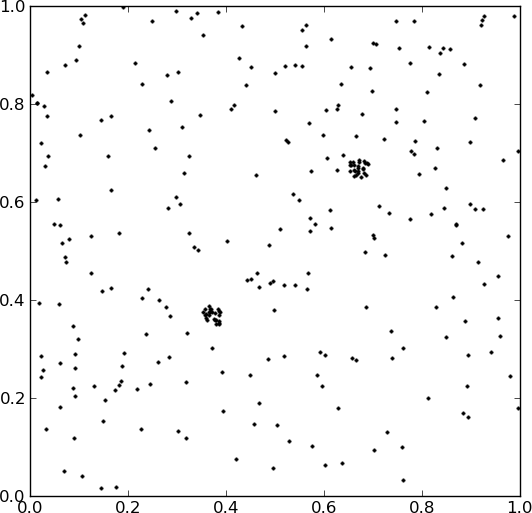}
    \includegraphics[width=.24\textwidth]{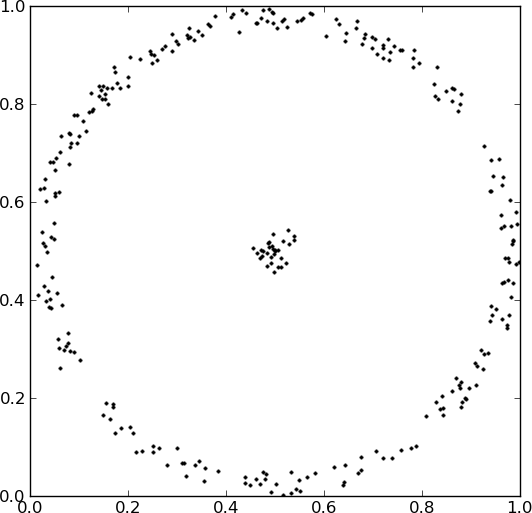}
    \includegraphics[width=.24\textwidth]{gaussians1.png}
  }
  \vspace{.1in}
  \centerline{
    \includegraphics[width=.24\textwidth]{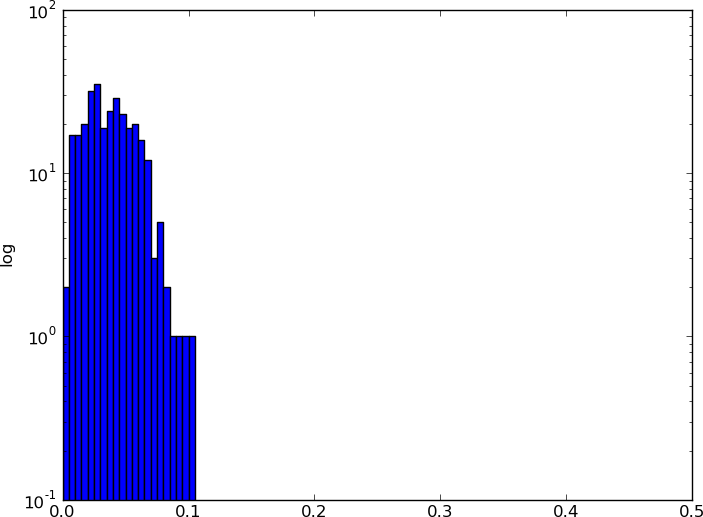}
    \includegraphics[width=.24\textwidth]{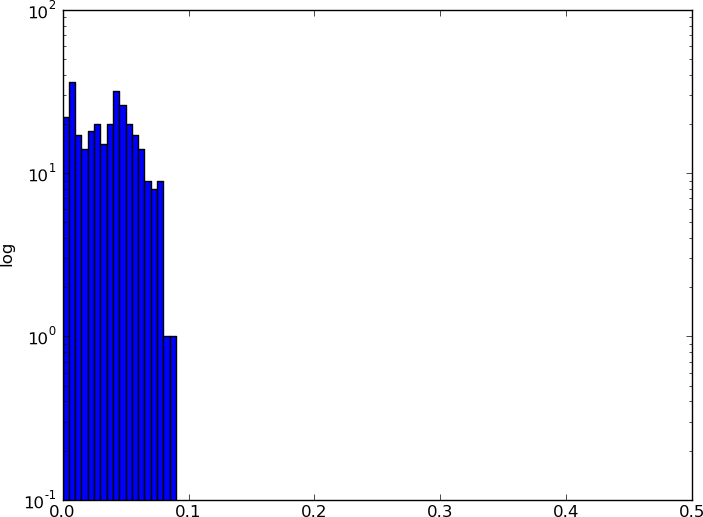}
    \includegraphics[width=.24\textwidth]{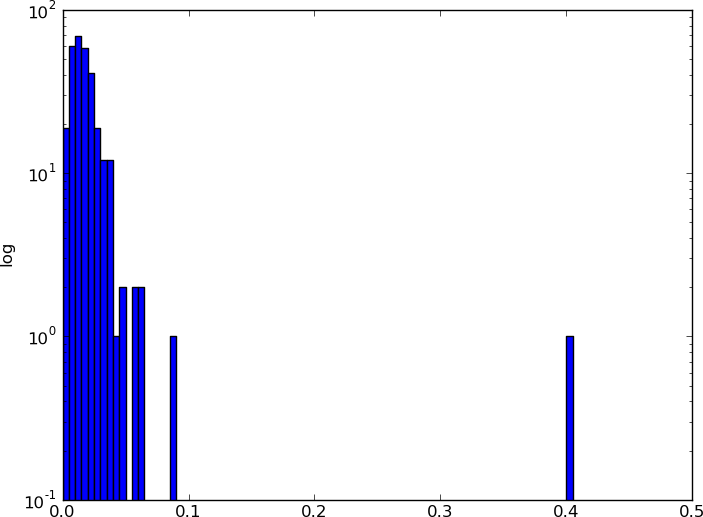}
    \includegraphics[width=.24\textwidth]{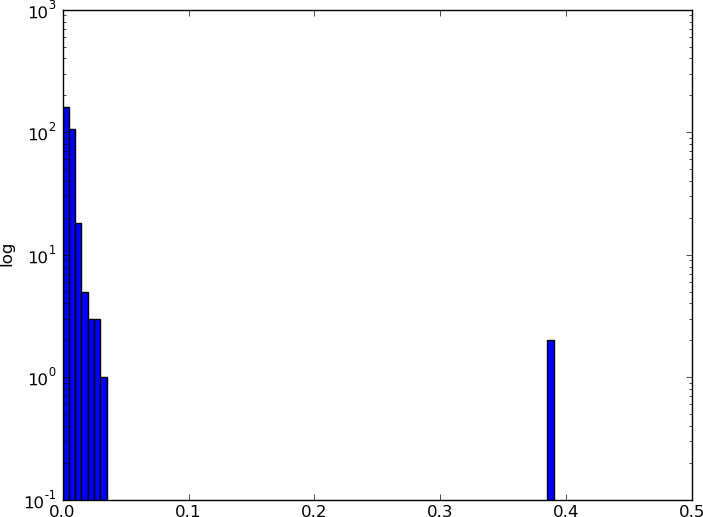}
  }
  
  \caption{Histograms (in logarithmic scale) of MST edge lengths from different points configurations. The non-clustered case (first column) differs from the other cases. Notice that clustered configurations also differ from each other.}
  \label{fig:expEdgesMST}
\end{figure*}

Concretely, we are looking to evaluate the probability of occurrence, under the background model (i.e. unclustered data), of a random set $\random{C}$ which exhibits the characteristics of a given observed set $\deterministic{C}$. Both sets have the same cardinality, i.e. $|E(\deterministic{C})| = |E(\random{C})| = K$.

The general principle has been previously explored. In 1983, following the same rationale Hoffman and Jain~\cite{hoffman83} proposed a similar idea: to perform a test of randomness. They built a null hypothesis using the edge length distribution of the MST and they performed a single test analyzing whether the whole dataset belongs to the random model or not by computing the difference between the theoretical and the empirical CDF. Jain~\etal~\cite{jain02} further refined this work, by using heuristic computations to separate the dataset into two or more subsets which were then tested using a two sample test statistic. Barzily~\etal recently continued this line of work~\cite{barzily09}. This approach introduces a bias towards the detection of compact (i.e. non-elongated) and equally sized clusters~\cite{barzily09}. From the perspective chosen in this work these characteristics can be seen as shortcomings, and thus we build a new method.

\begin{definition}
  Let $\mathcal{P}$ be a partition of $\R$.
  We define the equivalence relation $\sim$ where $\omega_1 \sim \omega_2$ if and only if $\exists P \in \mathcal{P}$ such that $\omega_1 \in P$ and $\omega_2 \in P$.
\end{definition}

We denote by $e_i$ the $i$-th edge of $E(\deterministic{C})$ and by $\randomEdge_i$ the $i$-th edge of $E(\random{C})$. Inspired by Equation~\ref{eq:felzenszwalbCriteria}, which proved successful as a decision rule to detect clusters, and associating it with Equations~\ref{eq:maxCF1} and~\ref{eq:maxCF2}, we compute
\begin{equation}
  \Pr \Big( \omega_{\max}(\random{C}) < \omega_{\max}(\deterministic{C}) \ |\ \omega_{\max}(\father{\random{C}}) \sim \omega_{\max}(\father{\deterministic{C}}) \Big).
\end{equation}

\begin{definition}
  Let $\deterministic{C} \in X$ be a component of the single-link hierarchy $\Tree$ induced by the minimum spanning tree $T = (X, E_T)$ such that $|\deterministic{C}| > 1$. We define the probability of false alarms (PFA) of $\deterministic{C}$ as
  \begin{equation}
    \PFA(\deterministic{C}) \stackrel{\mathrm{def}}{=} \Pr \Big( \omega_{\max}(\random{C}) < \omega_{\max}(\deterministic{C}) \ |\ \omega_{\max}(\father{\random{C}}) \sim \omega_{\max}(\father{\deterministic{C}}) \Big).
  \end{equation}
  \label{def:pfaMST}
\end{definition}
The constraint $|\deterministic{C}| > 1$ is needed since $E(\deterministic{C}) = \emptyset$ when $|\deterministic{C}| = 1$. Note that sets consisting of a single node must certainly not be detected. Conceptually, even when they are isolated, they constitute an outlier and not a cluster. We simply do not test such sets.

To detect unlikely dense subgraphs, a threshold is necessary on the PFA. In the classical \emph{a contrario} framework, a new quantity is introduced: the Number of False Alarms (NFA), i.e. the product of the PFA by the number of tested candidate clusters. The NFA has a more intuitive meaning than the PFA, since it is an upper bound on the expectation of the number of false detections~\cite{desolneux08}. The threshold is then easily set on the NFA.

\begin{definition}[Number of false alarms]
  We define the number of false alarms (NFA) of $C$ as
  \begin{equation}
    \NFA(C) \stackrel{\mathrm{def}}{=} (|X| - 1) \cdot \PFA(C).
  \end{equation}
  Notice that, by definition, $|X| - 1$ is the number of non-singleton sets in the single-link hierarchy.
  \label{def:nfaMST}
\end{definition}

\begin{definition}[Meaningful component]
  A component $C$ is \meps-meaningful if
  \begin{equation}
    \NFA(C) < \eps.
  \end{equation}

  \label{def:meaningfulComponent}
\end{definition}

In the following, it is important to notice a fact about the single-link hierarchy. The components are mainly determined by the sorted sequence of the edges from the original graph; this follows directly from Kruskal's algorithm~\cite{cormen}. However, the components are independent of the differences between the edges in that sorted sequence: only the order matters and not the actual weights of the edges.

We will now state a simple proposition that motivates Definition 4.
\begin{proposition}
  The expected number of \meps-meaningful clusters in a random single-link hierarchy (i.e. issued from the background model) is lower than \meps.
  \label{prop:expectation}
\end{proposition}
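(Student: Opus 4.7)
The proof is a standard \emph{a contrario} argument built on linearity of expectation, so the plan is to reduce it to a single-component tail bound and then sum.

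First I would introduce, for each non-singleton component $C$ in the single-link hierarchy $\mathcal{T}$, the indicator random variable $\chi_C$ that is $1$ when $C$ is \varepsilon-meaningful and $0$ otherwise. The number of \varepsilon-meaningful components is $N = \sum_C \chi_C$, where the sum is over the $|X|-1$ non-singleton components (as noted just after Definition~\ref{def:nfaMST}). By linearity of expectation,
\begin{equation*}
  \mathbb{E}[N] \;=\; \sum_{C} \Pr\bigl(\NFA(C) < \varepsilon\bigr) \;=\; \sum_C \Pr\!\left(\PFA(C) < \frac{\varepsilon}{|X|-1}\right),
\end{equation*}
so it suffices to show, under the background model, that for every non-singleton $C$ and every $t \in [0,1]$,
\begin{equation*}
  \Pr\bigl(\PFA(C) \le t\bigr) \;\le\; t.
\end{equation*}

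The key step is this per-component tail bound, which is essentially the statement that a CDF evaluated at its own random variable is super-uniform. Fix a component $C$ and condition on the event $E_C = \{\omega_{\max}(\father{\random{C}}) \sim \omega_{\max}(\father{\deterministic{C}})\}$. By Definition~\ref{def:pfaMST}, $\PFA(C)$ is precisely $F_C(\omega_{\max}(\random{C}))$, where $F_C$ is the conditional CDF of $\omega_{\max}(\random{C})$ given $E_C$ under the background model. A standard argument shows that for any real random variable $Y$ with CDF $F$ one has $\Pr(F(Y) \le t) \le t$ for every $t \in [0,1]$ (with equality when $F$ is continuous). Applied to $Y = \omega_{\max}(\random{C})$ conditionally on $E_C$, this gives $\Pr(\PFA(C) \le t \mid E_C) \le t$, and integrating over $E_C$ yields the unconditional bound.

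Combining the two steps,
\begin{equation*}
  \mathbb{E}[N] \;\le\; \sum_{C} \frac{\varepsilon}{|X|-1} \;=\; (|X|-1) \cdot \frac{\varepsilon}{|X|-1} \;=\; \varepsilon,
\end{equation*}
which is exactly the desired inequality.

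The main obstacle is being careful about the conditioning in the second step: the definition of $\PFA(C)$ involves conditioning on the equivalence class of $\omega_{\max}(\father{\deterministic{C}})$ under the partition $\mathcal{P}$, and one must verify that treating $\PFA(C)$ as a conditional CDF of $\omega_{\max}(\random{C})$ is legitimate. Discreteness (since $\mathcal{P}$ is typically a finite partition and the MST has finitely many edges) only helps, because the super-uniformity inequality $\Pr(F(Y) \le t) \le t$ holds even without continuity. A minor subtlety is to make sure the $|X|-1$ components are counted correctly regardless of ties in edge weights; as noted in the excerpt, the hierarchy depends only on the order of edges, so the count is deterministic and the linearity step is unambiguous.
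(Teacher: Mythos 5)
Your proof is correct as a proof of the proposition taken literally, but it follows a genuinely different route from the paper's. You apply the probability-integral-transform (super-uniformity) bound once per component, directly to $\omega_{\max}(\random{C})$: since Definition~\ref{def:pfaMST} declares $\PFA(C)$ to be the (left-limit of the) conditional CDF of $\omega_{\max}(\random{C})$ evaluated at the component's own maximal edge, you get $\Pr(\PFA(C) \le t) \le t$ for each of the $|X|-1$ non-singleton components, and linearity of expectation finishes the argument with no independence hypotheses at all. The paper's proof in Appendix~\ref{sec:proofs} never uses the exact conditional law of the maximum; it works with the surrogate that the algorithm actually computes, namely the rank-statistics formula $\PFA(\deterministic{C_i}) = \mathrm{F}_{\Omega}\big(\omega_{\max}(\deterministic{C_i}),\ \omega_{\max}(\father{\deterministic{C_i}})\big)^{K_i}$ of Equation~\ref{eq:rankStatistics}. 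It therefore conditions on $K_i=k$, rewrites the event $\{ m \cdot \mathrm{F}_{\Omega}(\cdot)^{k} < \eps \}$ as a statement about the maximum over the $k$ edges, factors it into a product of $k$ per-edge events using the conditional independence assumption of Section~\ref{sec:backgroundModel} (together with the assumed independence of $K_i$ from the probability), and applies Lemma~\ref{lem:classic} to each factor, obtaining $(\eps/m)^{1/k}$ per edge and $\eps/m$ per component. The trade-off is this: your argument is shorter and assumption-free, but it bounds the expectation for the ideal $\PFA$ of Definition~\ref{def:pfaMST}, whereas the paper's argument bounds it for the product-form NFA that is actually thresholded in practice; the two coincide only under the naive conditional-independence model, so if the proposition is read as a statement about the algorithm's computed NFA, your proof does not cover it without importing that assumption. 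Two small points to tighten in your write-up: the conditioning that defines $\PFA(C)$ fixes the cardinality $K$ of $E(\random{C})$ as well as the equivalence class of $\omega_{\max}(\father{\random{C}})$, so the super-uniformity step should be stated conditionally on both before integrating; and because the definition uses a strict inequality you are evaluating a left-continuous CDF, for which the safe form of the bound is the one in Lemma~\ref{lem:classic}, $\Pr(F(X) < t) \le t$.
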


The proof is given in Appendix~\ref{sec:proofs} in the light of the discussion in the next section.
 
\subsection{The background model}
\label{sec:backgroundModel}

The distribution $\Pr \Big( \omega_{\max}(\random{C}) < \omega_{\max}(\deterministic{C}) \ |\ \omega_{\max}(\father{\random{C}}) \sim \omega_{\max}(\father{\deterministic{C}}) \Big)$ is not known a priori. Moreover, up to our knowledge there is no analytical expression for the cumulative edge distribution under $\Hy_0$ for the MST~\cite{hoffman83}.

We estimate this distribution by performing Monte Carlo simulations of the background process. However this estimation would involve extremely high computational costs.

We assume that the edge lengths in the cluster, given that $\omega_{\max}(\father{\random{C}}) \sim \omega_{\max}(\father{\deterministic{C}})$, are mutually conditionally independent and identically distributed. Let $\Omega$ be a random variable with this common distribution:
\begin{equation}
  \mathrm{F}_{\Omega} \Big(  \omega,\  \omega_{\max}(\father{\deterministic{C}}) \Big) = \Pr \Big( \Omega  \leq \omega \ |\ \omega_{\max}(\father{\random{C}})  \sim \omega_{\max}(\father{\deterministic{C}}) \Big) \text{.}
  \label{eq:distributionOmega}
\end{equation}
Then, using rank statistics:
\begin{multline}
  \Pr \Big( \omega_{\max}(\random{C})  \leq \omega_{\max}(\deterministic{C}) \ |\ \omega_{\max}(\father{\random{C}})  \sim \omega_{\max}(\father{\deterministic{C}}) \Big) \\
  =\mathrm{F}_{\Omega} \Big(  \omega_{\max}(\deterministic{C}),\  \omega_{\max}(\father{\deterministic{C}}) \Big)^K .
  \label{eq:rankStatistics}
\end{multline}
The distribution of $\Omega$ is much easier to compute than the one of $\omega_{\max}(\random{C})$ given that $\omega_{\max}(\father{\random{C}}) \sim \omega_{\max}(\father{\deterministic{C}}) \Big)$ as it requires fewer Monte Carlo simulations and thus we use it as in Equation~\ref{eq:rankStatistics} to compute the PFA.

Now, the independence hypothesis is clearly false even for i.i.d. graph vertices, because of the ordering structure and edge selection induced by the MST. However, the conditional dependence may be weak enough to make the model suitable in practice. This explains why naive classifiers, such as the Naive Bayes classifier, despite their simplicity, can often outperform more sophisticated classification methods~\cite{hastie01}. Therefore we only assume conditional independence in our model. Still, the suitability of this model has to be checked; a series of experiments shows that no clusters are detected in non-clustered, unstructured data (for instance, sets of independent, uniformly distributed vertices). On the other side of the problem, meaningful clusters always constitute a clear violation of this naive independence assumption.

The estimation process is depicted in Algorithm~\ref{algo:backgroundModel}.
Classically, one defines a point process and a sampling window. Hoffman and Jain~\cite{hoffman83} point out that the sampling window for the background point process is usually unknown for a given dataset. They use the convex hull arguing that it is the maximum likelihood estimator of the true sampling window for uniformly distributed two-dimensional data. In the experiments from Section~\ref{sec:syntheticExamples}, we simply use the minimum hiper-rectangle that contains the whole dataset as the sampling window.
However, there are problems whose intrinsic characteristics allow to define other background processes that do not involve a sampling window.

\begin{algorithm}
\caption{Compute $\Pr \Big( \omega_{\max}(\random{C}) < \omega_{\max}(\deterministic{C}) \ |\ \omega_{\max}(\father{\random{C}}) \sim \omega_{\max}(\father{\deterministic{C}}) \Big)$ for a set of $N$ points by $Q$ Monte Carlo simulations.}
\label{algo:backgroundModel}
\begin{algorithmic}

  \FORALL{$q$ such that $1 \leq q \leq Q$}
    \STATE $X \leftarrow$ draw $N$ points from the background point process.
    \STATE build the single-link hierarchy $\Tree_q$ from the MST of $X$.
  \ENDFOR
  
  \STATE compute a conditional histogram from the set $\lbrace \Tree_q \rbrace_{q = 1 \dots Q}$
  
\end{algorithmic}
\end{algorithm}

\subsection{Eliminating redundancy}
\label{sec:maximality}

While each meaningful cluster is relevant by itself, the whole set of meaningful components exhibits, in general, high redundancy: a meaningful component $C_1$ can contain another meaningful component $C_2$~\cite{cao06unified}. This question can be answered by comparing $\NFA(C_1)$ and $\NFA(C_2$) using Definition~\ref{def:meaningfulComponent}. The group with the smallest NFA must of course be preferred. Classically, the following rule
\begin{algorithmic}
  \FORALL{\meps-meaningful clusters $C_1$, $C_2$}
    \IF{$C_2 \subset C_1 \ \lor\  C_1 \subset C_2$}
      \STATE eliminate $\argmax \left( \NFA(C_1), \NFA(C_2) \right)$
    \ENDIF
  \ENDFOR
\end{algorithmic}
would have been used to perform the pruning of the set of meaningful components. Unfortunately, it leads to a phenomenon described in~\cite{cao06unified}, where it was called collateral elimination. A very meaningful component can hide another meaningful sibling, as illustrated in Figure~\ref{fig:collateralElimination}.

\begin{figure}
  \centering{
  \begin{tabular}{cc}
    \subfloat[]{\includegraphics[width=1in]{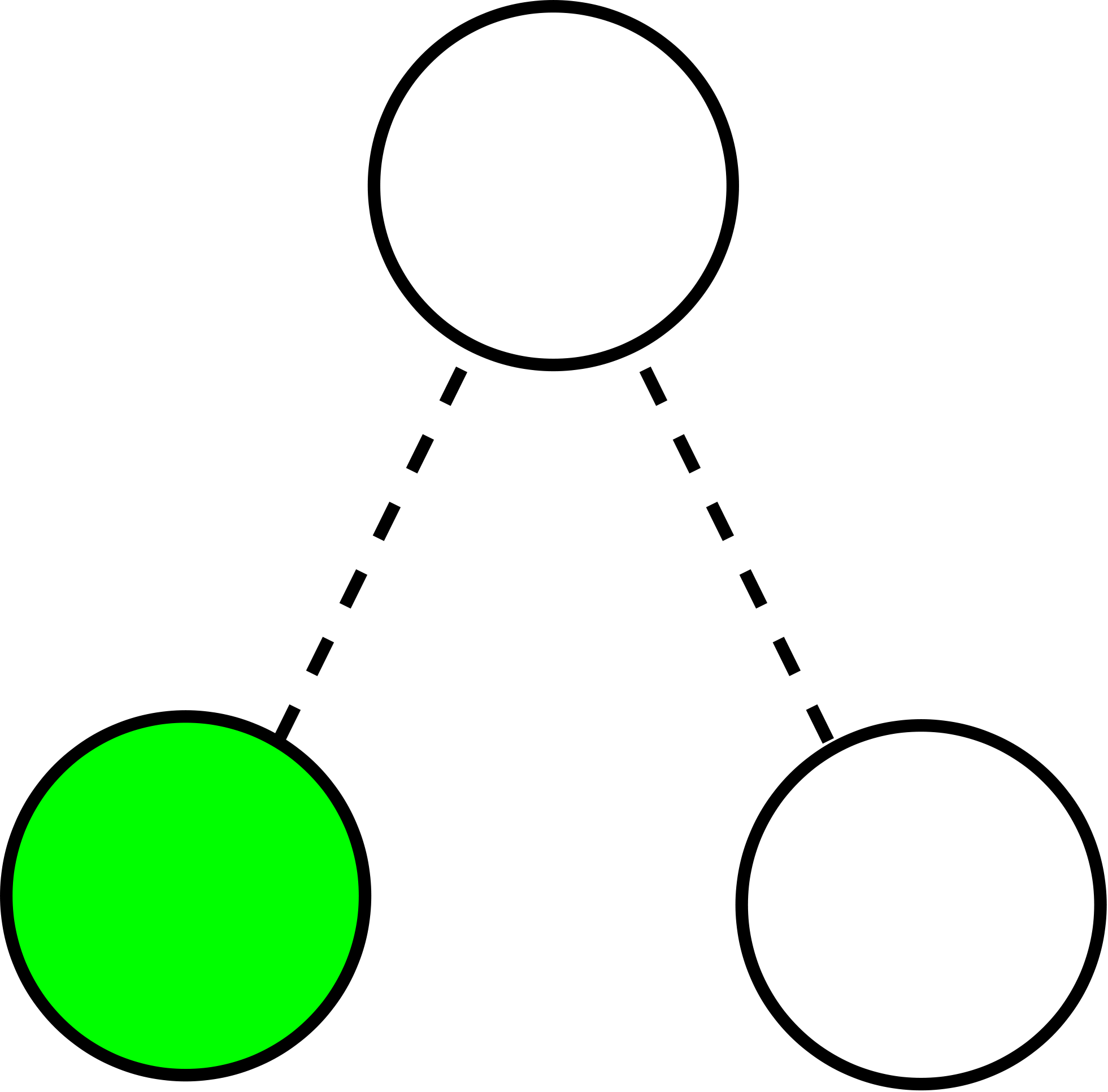}\label{fig:classicalMaximality}}
    \put(-42, 56){$C_2$}
    \put(-65, 9){$C_1$}
    \put(-17, 9){$C_3$}
    &
    \subfloat[]{\includegraphics[width=1in]{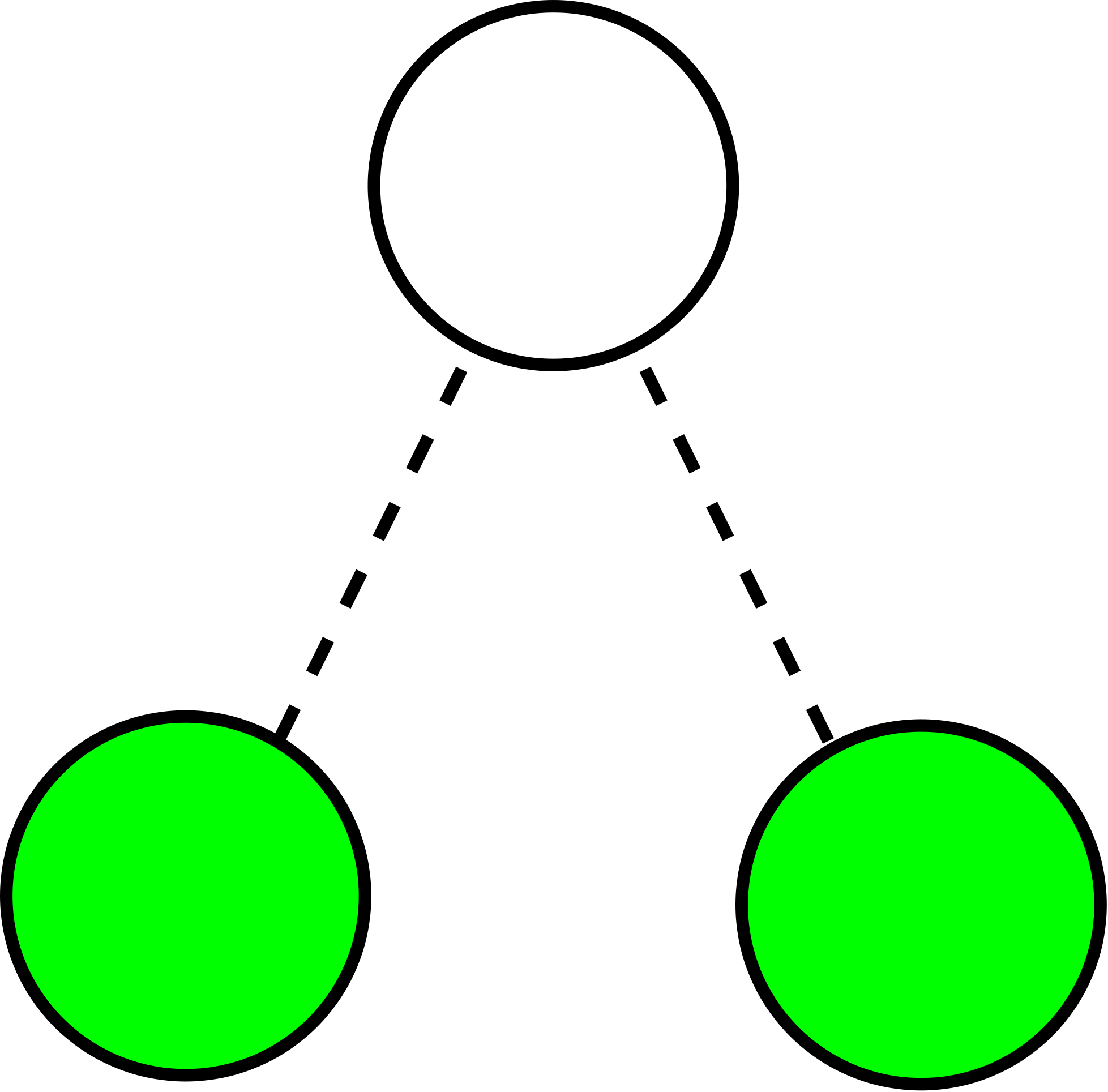}\label{fig:maximality}}
    \put(-42, 56){$C_2$}
    \put(-65, 9){$C_1$}
    \put(-17, 9){$C_3$}
  \end{tabular}
  }
  \caption{Example of collateral elimination. Three components $C_1, C_2, C_3$ such that $C_1 \subset C_2$, $C_3 \subset C_2$ and $\NFA(C_1) < \NFA(C_2) < \NFA(C_3) < \eps$. \subref{fig:classicalMaximality} The classical maximality rule only selects $C_1$ as a maximal component. \subref{fig:maximality} The scheme in Algorithm~\ref{algo:maximality} selects $C_1$ and $C_3$.}
  \label{fig:collateralElimination}
\end{figure}

The single-link hierarchy offers an alternative scheme to prune the redundant set of meaningful components, profiting from the inclusion properties of the dendrogram structure. It is non-other than the exclusion principle, defined first by Desolneux~\etal~\cite{desolneux08}, which states that

\begin{quote}
Let $A$ and $B$ be groups obtained by the same gestalt law. Then no point $x$ is allowed to belong to both $A$ and $B$. In other words each point must either belong to $A$ or to $B$.
\end{quote}
A simple scheme for applying the exclusion principle is shown in Algorithm~\ref{algo:maximality}.

Since we are choosing the components that are more in accordance with the proximity gestalt, we call the resulting components Proximal Meaningful Components (PMC). Then, we say that the set of all proximal meaningful components is a Meaningful Clustered Forest (MCF).

\begin{algorithm}
\caption{Eliminate redundant components from the set $\mathcal{M}$ of meaningful components.}
\label{algo:maximality}
\begin{algorithmic}[1]
  
  \STATE $\mathcal{F} \leftarrow \emptyset$
  
  \WHILE{$\mathcal{M} \neq \emptyset$}
    \STATE $\displaystyle C_{\min} \leftarrow \argmin_{C \in \mathcal{M}} \NFA(C)$
    \STATE eliminate $C_{\min}$ from $\mathcal{M}$
    \STATE eliminate all components $C$ from $\mathcal{M}$ such that $C \subset C_{\min}$
    \STATE eliminate all components $C$ from $\mathcal{M}$ such that $C_{\min} \subset C$
    \STATE add $C_{\min}$ to $\mathcal{F}$
  \ENDWHILE
  
  \STATE $\mathcal{M} \leftarrow \mathcal{F}$
  
\end{algorithmic}
\end{algorithm}


\section{Experiments on Synthetic examples}
\label{sec:syntheticExamples}

As a sanity check, we start by testing our method with simple examples. Figure~\ref{fig:expRings} present clusters which are well but not linearly separated. The meaningful clustered forest describes correctly the structure of the data.

\begin{figure*}
  \centering
  \begin{tabular}{cc}
    Input data & MCF \\
    \includegraphics[width=.3\textwidth]{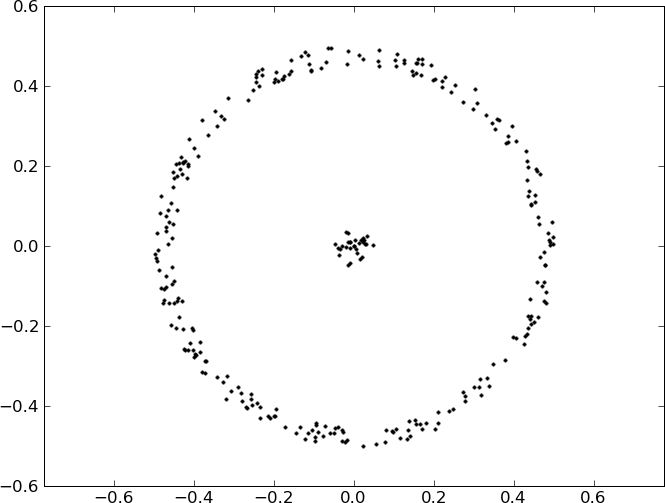} &
    \includegraphics[width=.3\textwidth]{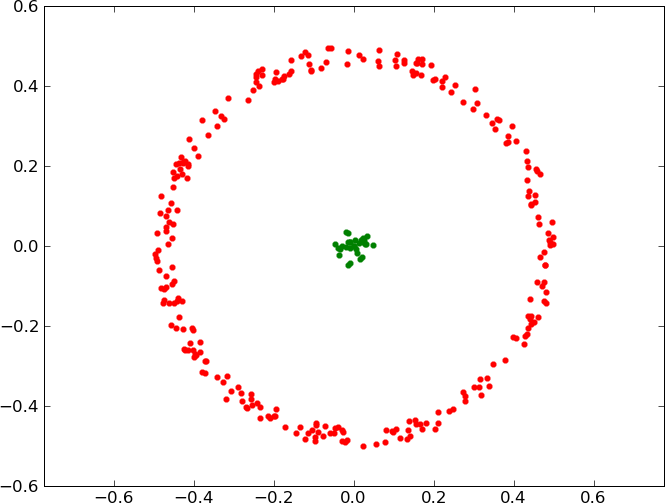} \\
    \includegraphics[width=.3\textwidth]{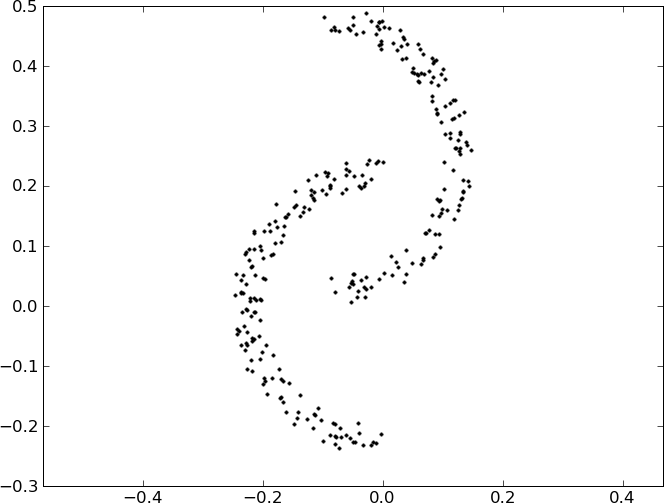} &
    \includegraphics[width=.3\textwidth]{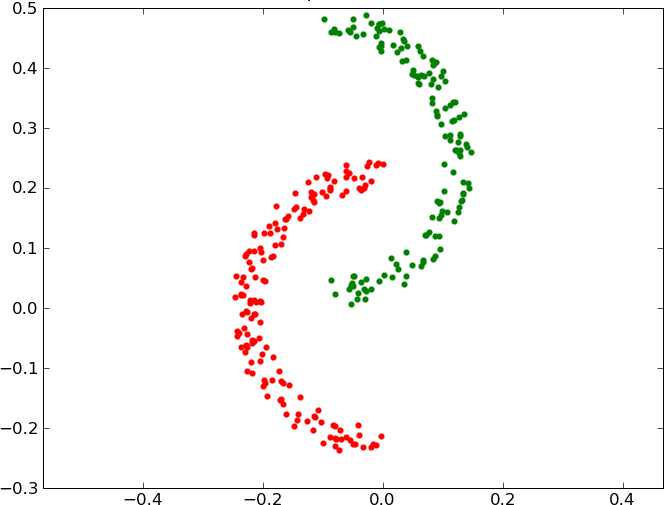}
  \end{tabular}
 
  \caption{The meaningful clustered forest correctly describes the points organization, even when clusters have arbitrary shapes.}
  \label{fig:expRings}
\end{figure*}

Figure~\ref{fig:expCao} shows an example of cluster detection in a dataset overwhelmed by outliers. The data consists of 950 points uniformly distributed in the unit square, and 50 points manually added around the positions $(0.4, 0.4)$ and $(0.7, 0.7)$. The figure shows the result of a numerical method involving the above NFA. The background distribution is chosen to be uniform in $[0, 1]^2$. Both visible clusters are found and their NFAs are respectively $10^{-15}$ and $10^{-8}$. Such low numbers can barely be the result of chance.

\begin{figure*}
  \centering{
    \begin{tabular}{cc}
      Input data & MCF \\
      \includegraphics[width=.3\textwidth]{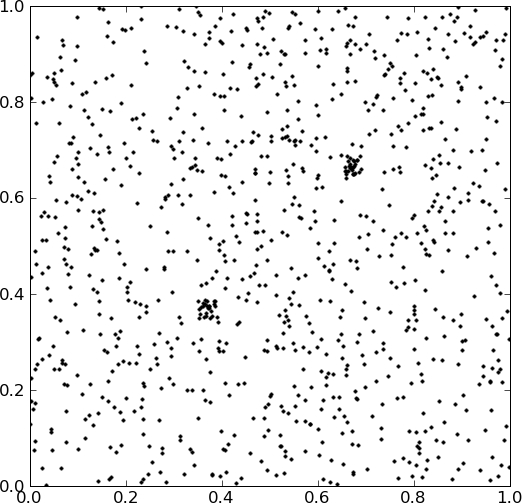} &
      \includegraphics[width=.3\textwidth]{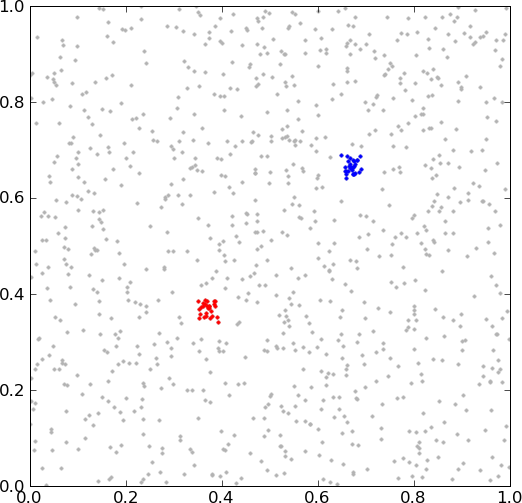}
    \end{tabular}
  }
 
  \caption{Similar experiment as performed by Cao~\etal in Figure 2~\cite{cao2005}. Clustering of twice 25 points around $(0.4, 0.4)$ and $(0.7, 0.7)$ surrounded by 950 i.i.d. points, uniformly distributed in the unit square. Exactly two proximal meaningful components are detected.}
  \label{fig:expCao}
\end{figure*}

The case of mixture of Gaussians, shown in Figure~\ref{fig:expGaussians}, provides an interesting example. On the tails, points are obviously sparser and the distance to neighboring points grows. Since we are looking for tight components, the tail might be discarded, depending on the Gaussian variance.

\begin{figure*}
  \centering{
    \begin{tabular}{cc}
      Input data & MCF \\
      \includegraphics[width=.3\textwidth]{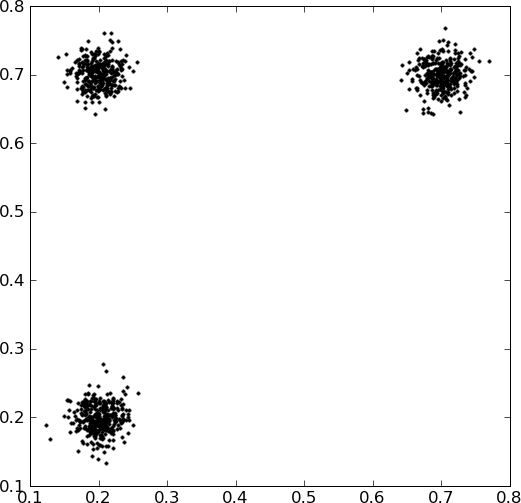} &
      \includegraphics[width=.3\textwidth]{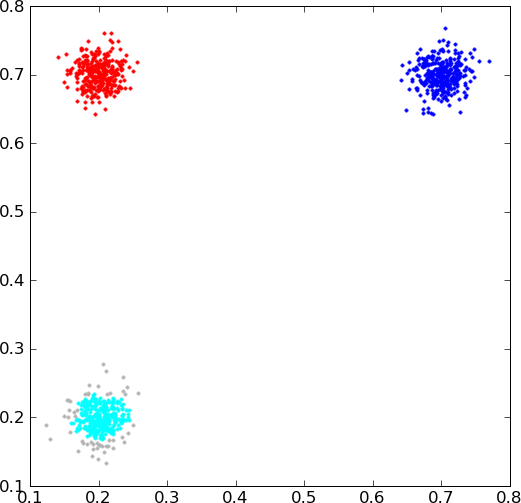} \\
    \end{tabular}
  }
  \caption{Clusters are correctly recovered in the mixture of three Gaussians. However some points are detected as noise (depicted in gray) in the tails.}
  \label{fig:expGaussians}
\end{figure*}

The example in Figure~\ref{fig:expDots} consists of a very complex scene, composed of clusters with different densities, shapes and sizes. Proximal components (i.e. we avoid testing $\NFA < \eps$) are displayed. Even when no decision about the statistical significance is made, the recovered clusters describe, in general, the scene accurately. Some oversplitting can be detected in proximal components. When a decision is made and only meaningful components are kept, we realize that the oversplit figures are not meaningful. As a sanity check, in Figure~\ref{fig:expDots-noise} we plot some of the detected structures superimposed to a realization of the background noise model. The input data in Figure~\ref{fig:dots} contains 764 points and for a given shape in it, with $W$ points, we plot the shape and $764-W$ points drawn from the background model. Among proximal components, the meaningful ones can be clearly perceived while non-meaningful ones are unnoticed.

\begin{figure*}

  \centerline{
    \subfloat[Input data]{
      \fbox{\includegraphics[width=.22\textwidth]{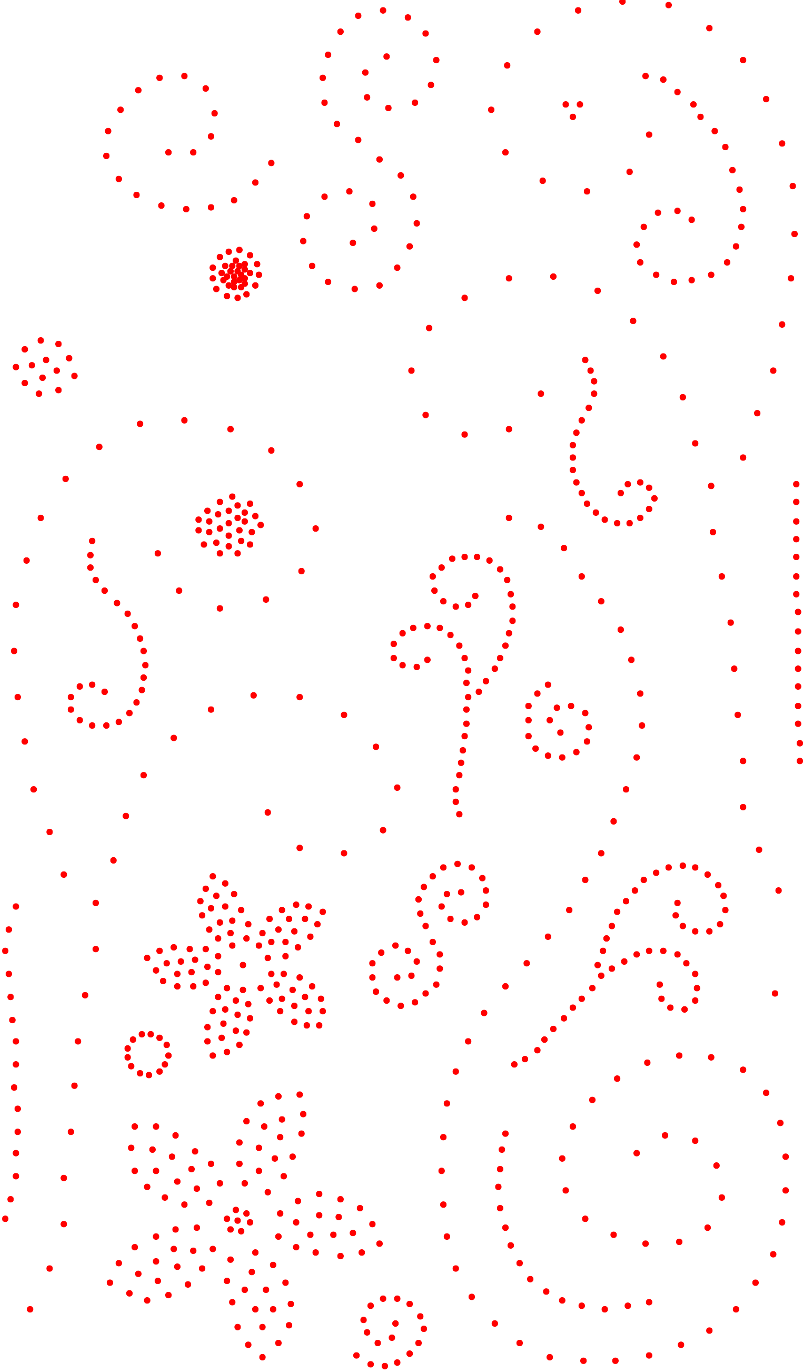}}
      \label{fig:dots}
    }
    \subfloat[MST]{
      \fbox{\includegraphics[width=.22\textwidth]{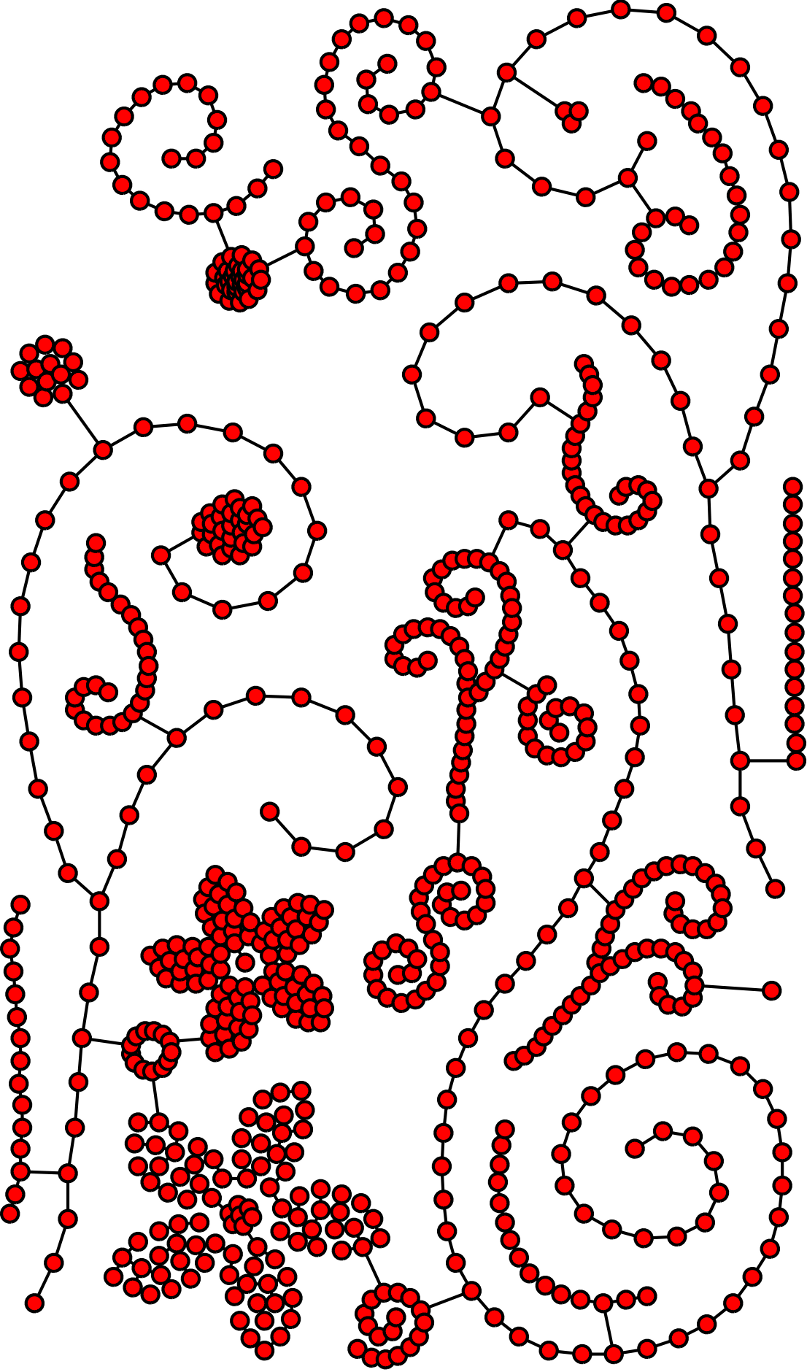}}
    }
    \subfloat[MC]{
      \fbox{\includegraphics[width=.22\textwidth]{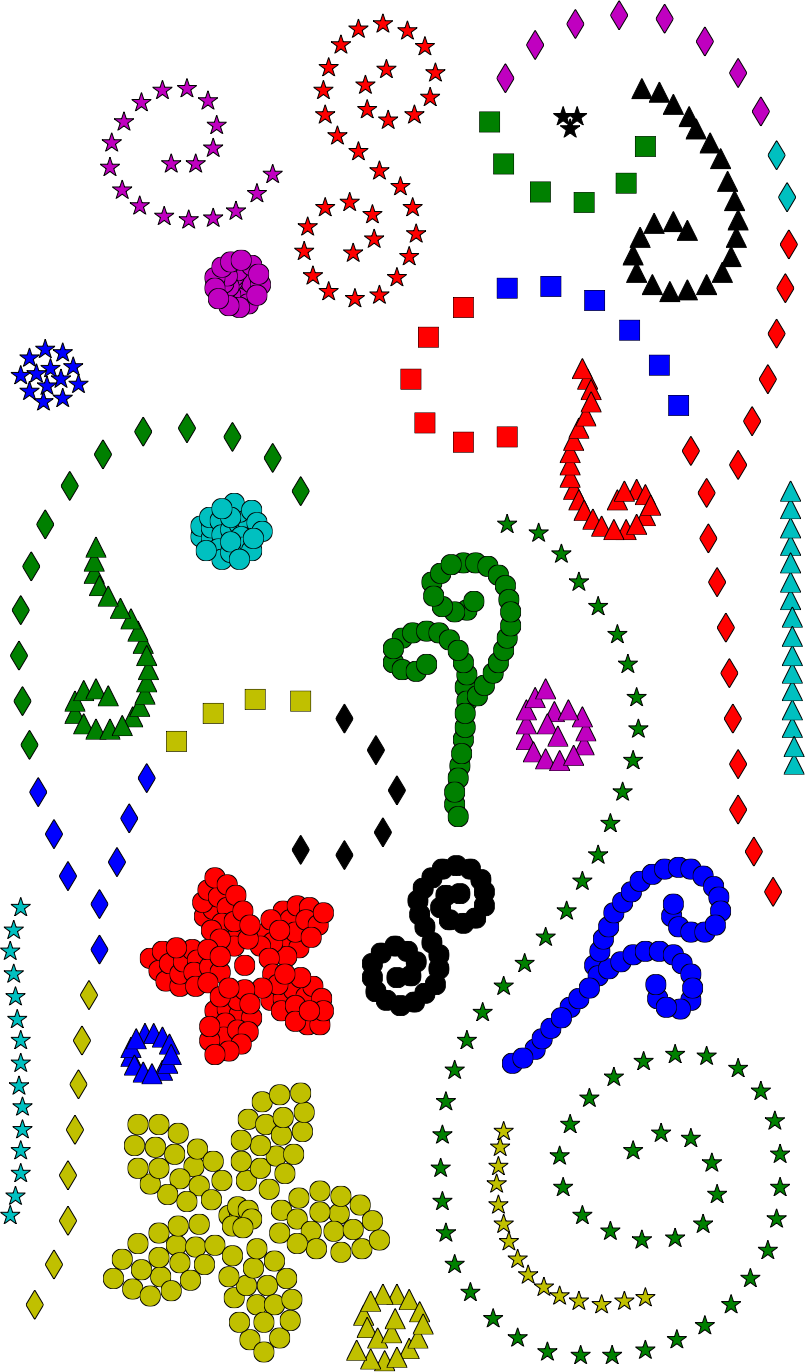}}
    }
    \subfloat[MMC]{
      \fbox{\includegraphics[width=.22\textwidth]{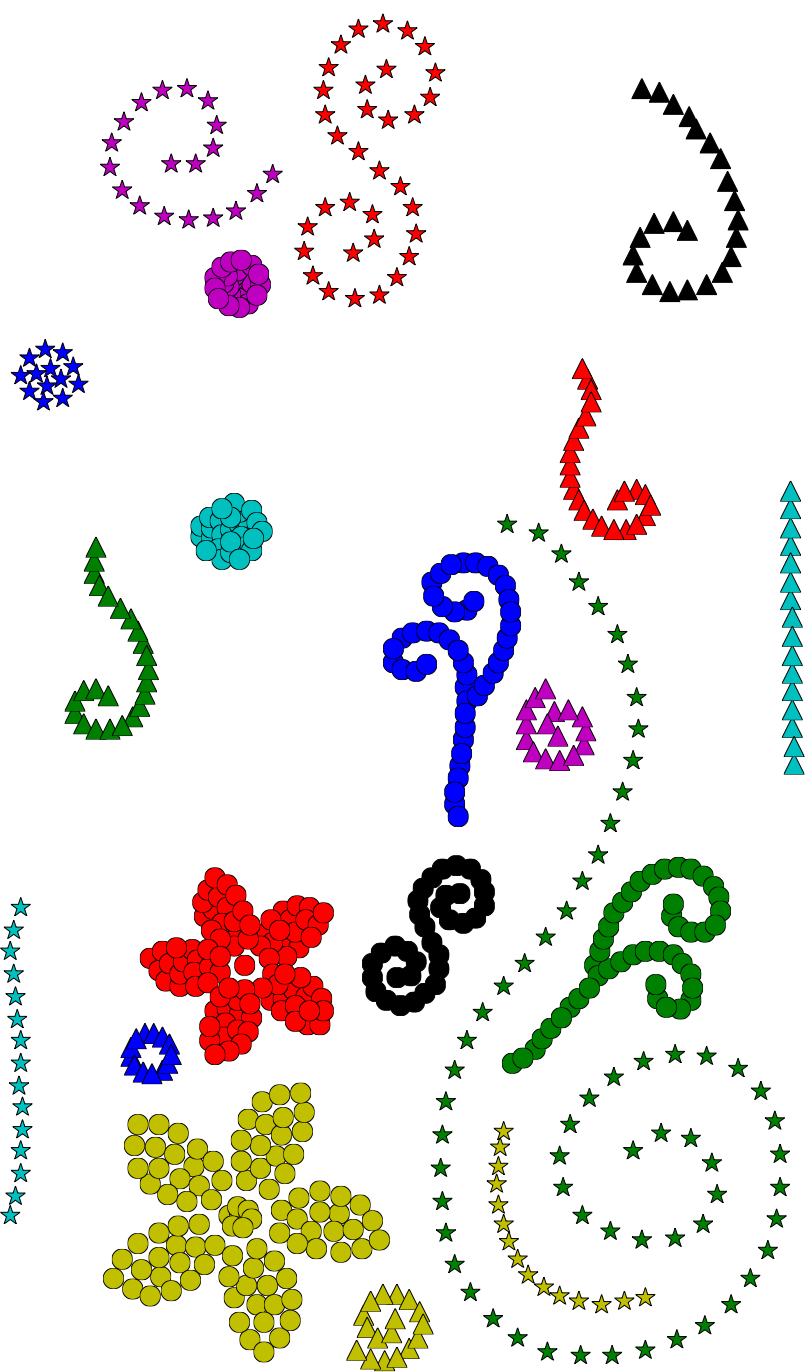}}
    }
  }

  \subfloat[Shapes drawn against noise. Shapes are respectively plotted in red and in black on the top and bottom rows.]{
    \begin{minipage}{.98\textwidth}
    \centerline{
      \fbox{\includegraphics[width=.22\textwidth]{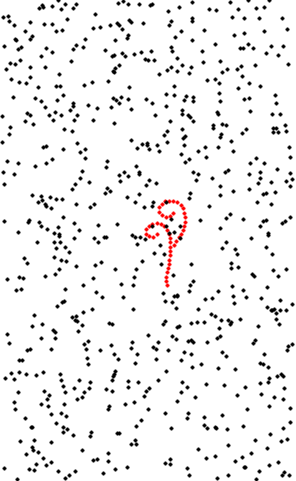}}
      \fbox{\includegraphics[width=.22\textwidth]{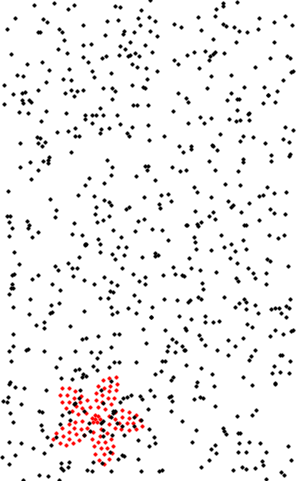}}
      \fbox{\includegraphics[width=.22\textwidth]{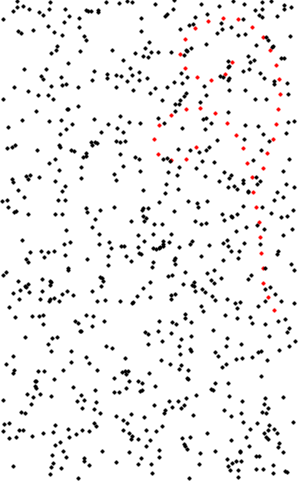}}
      \fbox{\includegraphics[width=.22\textwidth]{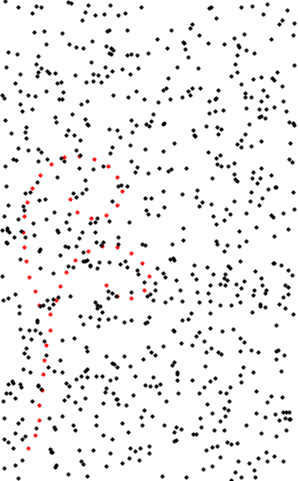}}
    }
    \vspace{.05in}
    \centerline{
      \fbox{\includegraphics[width=.22\textwidth]{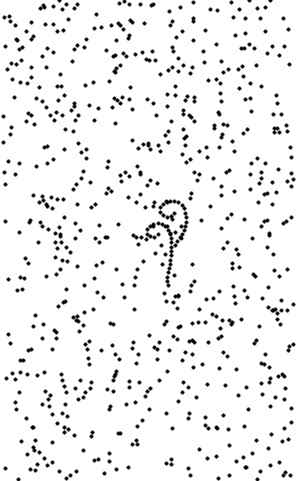}}
      \fbox{\includegraphics[width=.22\textwidth]{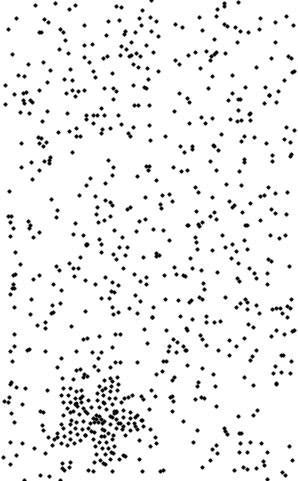}}
      \fbox{\includegraphics[width=.22\textwidth]{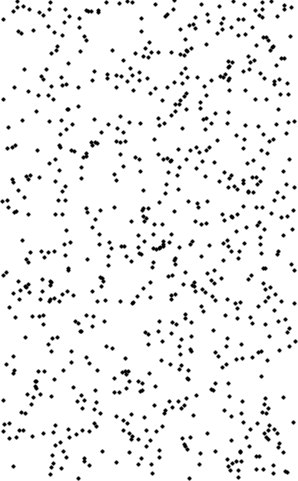}}
      \fbox{\includegraphics[width=.22\textwidth]{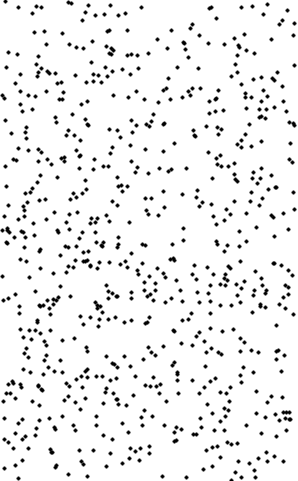}}
    }
    \end{minipage}
    \label{fig:expDots-noise}
  }

  \caption{In this example, maximal meaningful components correctly describe the organization of the points configuration. Only small or less denser figures are discarded. Indeed, meaningful components are clearly perceived in noise while non-meaningful components are not.}
  \label{fig:expDots}
\end{figure*}

Our results are compared with Felzenszwalb and Huttenlocher' algorithm (denoted by FH in the following) and with Mean Shift~\cite{comaniciu02,fukunaga75}. Mean Shift performs a non-parametric density estimation (using sliding windows) and finds its local maxima. Clusters are determined by what Comaniciu and Meer call ``bassins of attraction''~\cite{comaniciu02}: points are assigned to a local maximum following an ascendent path along the density gradient \footnote{code available at \url{http://www.mathworks.com/matlabcentral/fileexchange/10161-mean-shift-clustering}}.

Figure~\ref{fig:expDotsPedroAndMeanShift} present an experiment were FH and Mean Shift are used, respectively, to cluster the dataset in Figure~\ref{fig:dots}. Different runs were performed, by varying the kernel/scale size. Clearly, results are suboptimal in both cases. Both algorithms share the same main disadvantage: a global scale must be chosen a priori. Such a strategy is unable to cope with clusters of different densities and spatial sizes. Choosing a fine scale causes to correctly detect dense clusters at the price of oversplitting less denser ones. On the contrary, a coarser scale corrects the oversplitting of less denser clusters but introduces undersplitting for the denser ones.

\begin{figure*}[ht]
  \subfloat[Felzenszwalb and Huttenlocher' algorithm results at different scales.]{
    \begin{minipage}{.98\textwidth}
      \centerline{
	\fbox{\includegraphics[width=.22\textwidth]{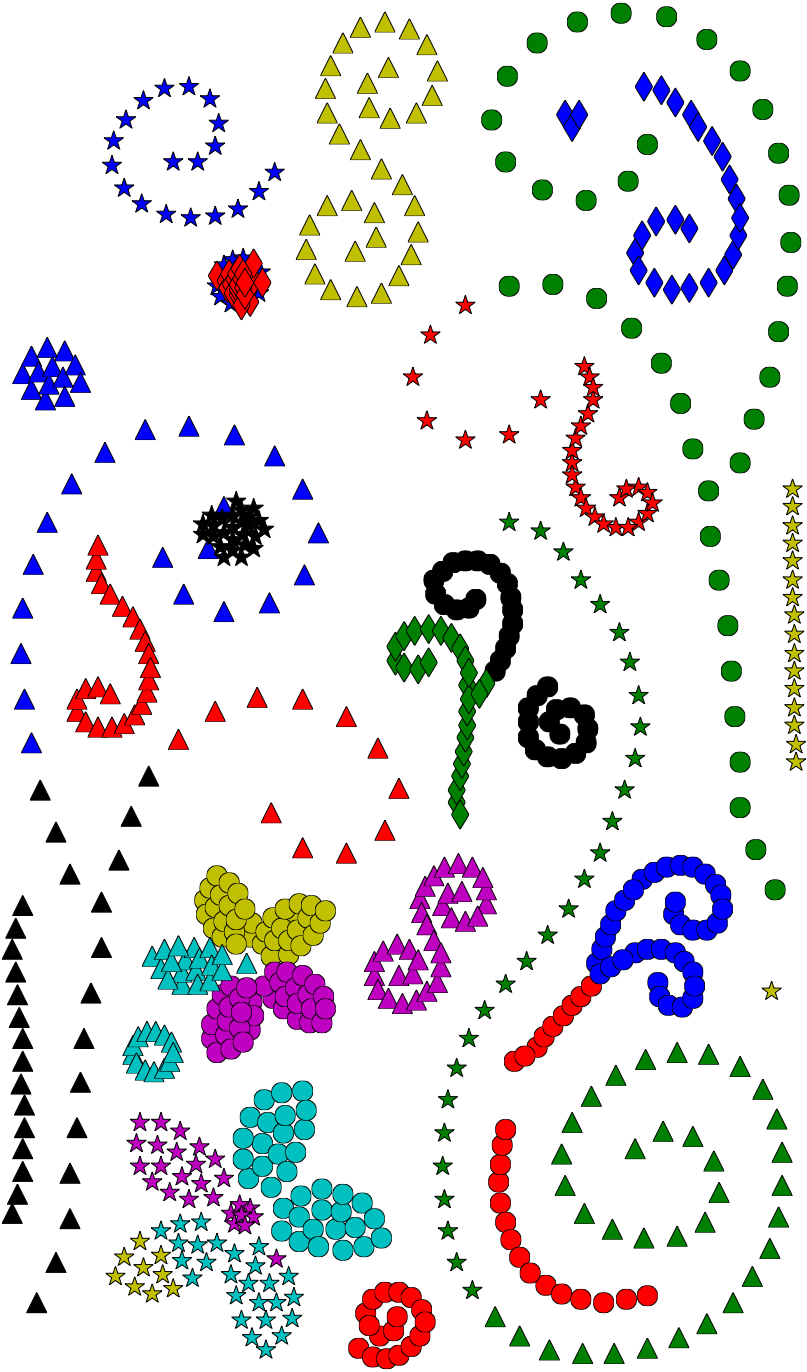}}
	\fbox{\includegraphics[width=.22\textwidth]{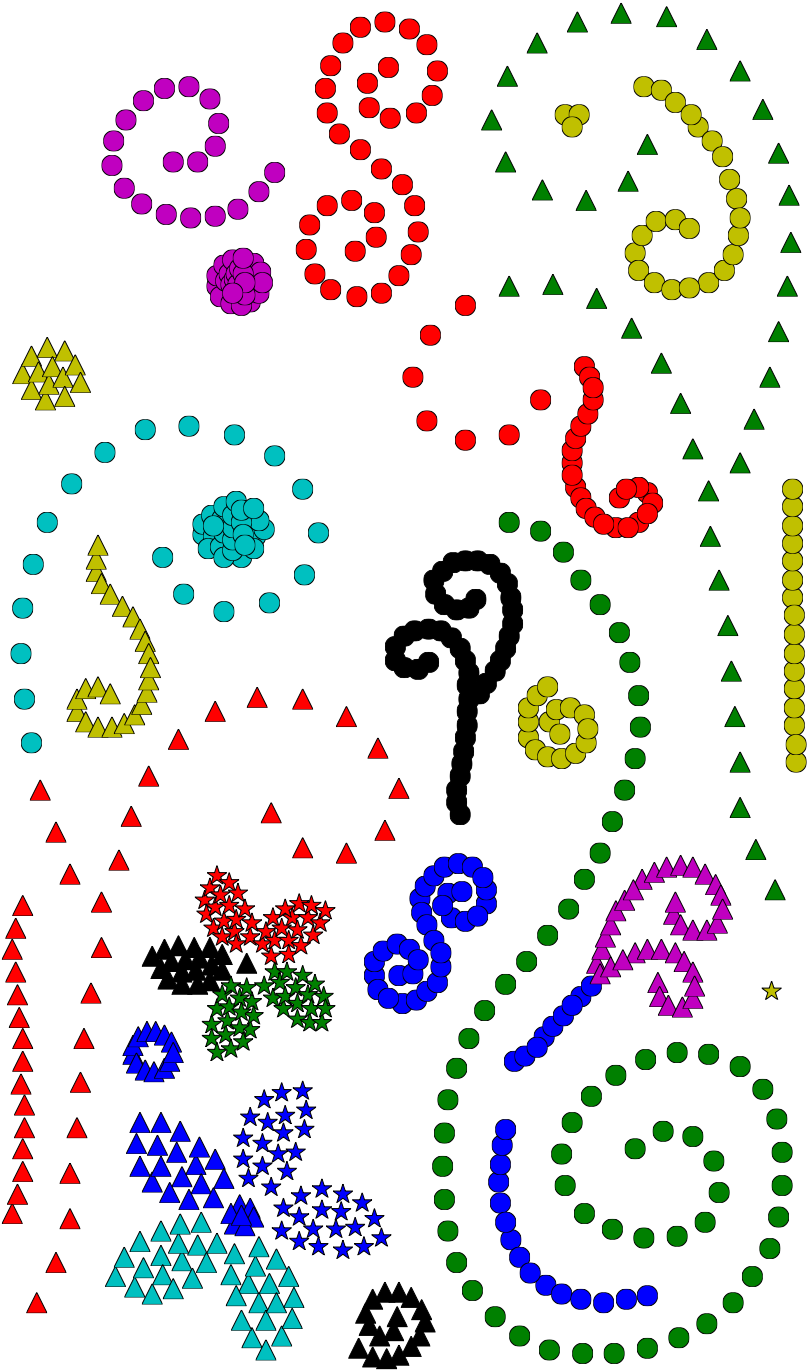}}
	\fbox{\includegraphics[width=.22\textwidth]{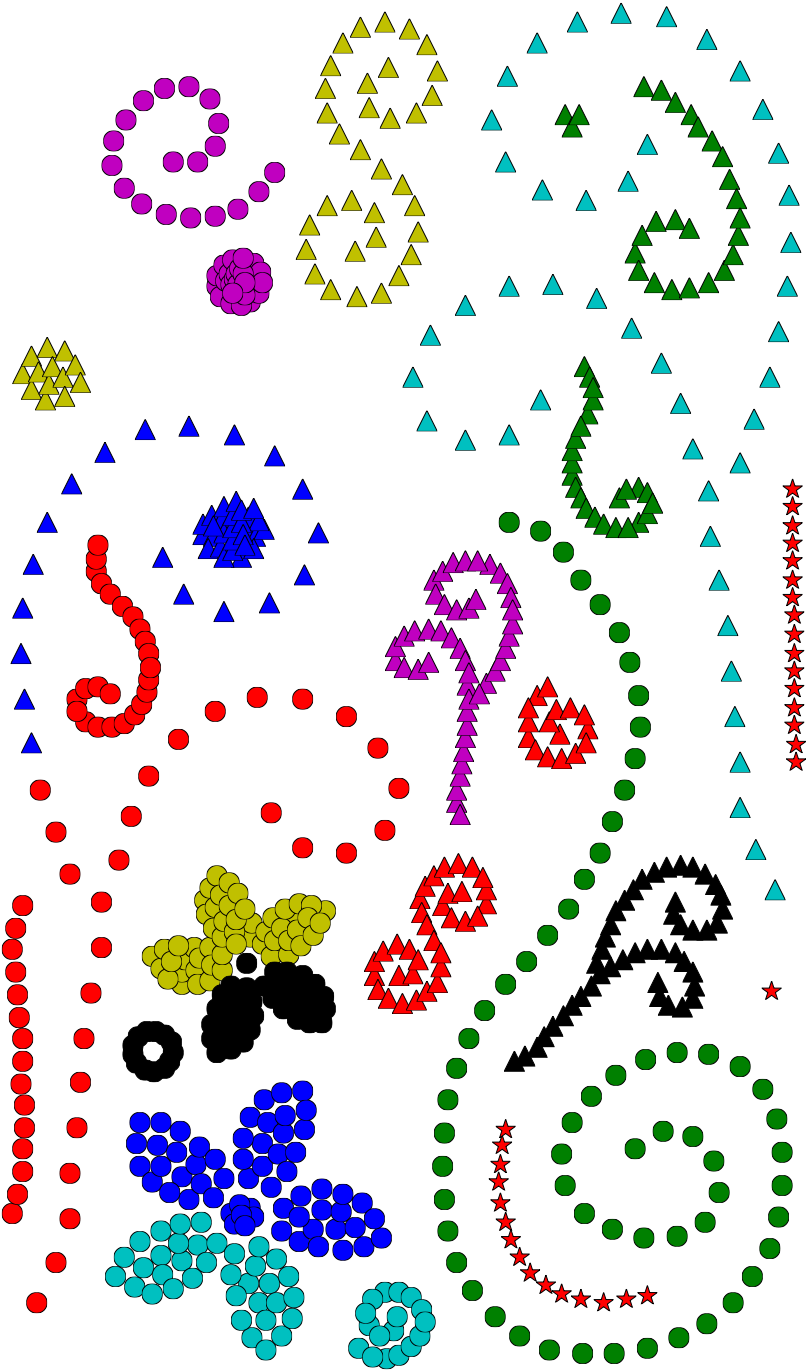}}
	\fbox{\includegraphics[width=.22\textwidth]{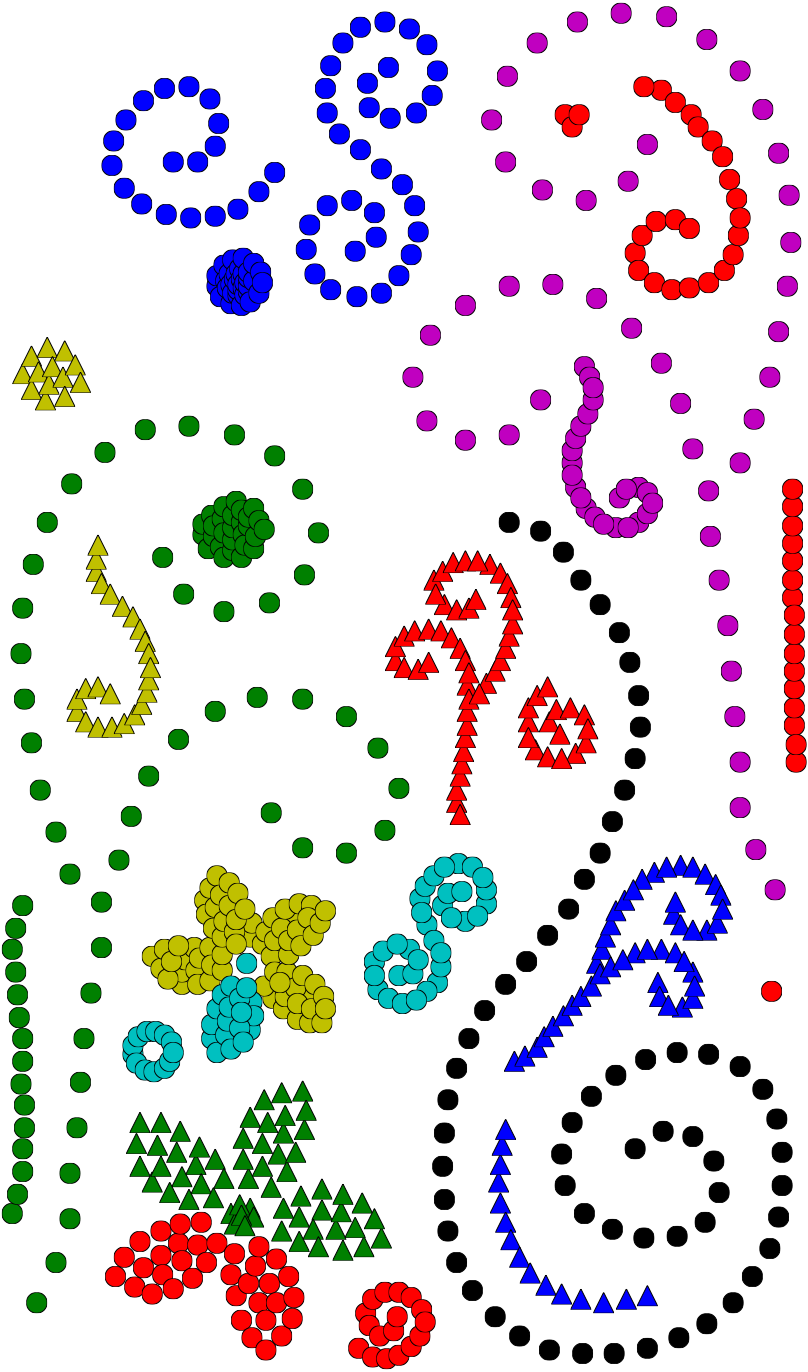}}
      }
    \end{minipage}

    \label{fig:expDotsPedro}
  }
  
  \subfloat[Mean Shift results with different values of kernel sizes.]{
    \begin{minipage}{.98\textwidth}
      \centerline{
	\fbox{\includegraphics[width=.22\textwidth]{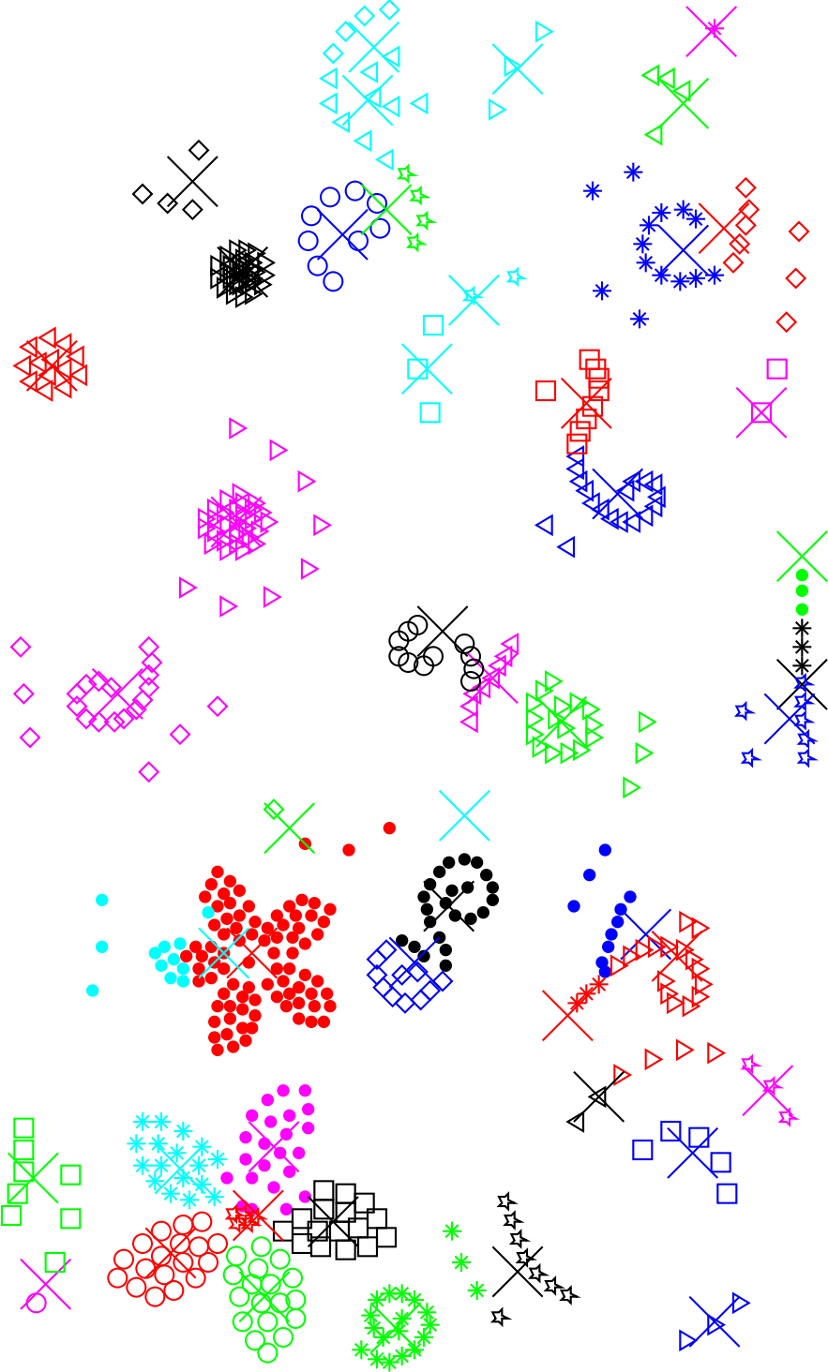}}
	\fbox{\includegraphics[width=.22\textwidth]{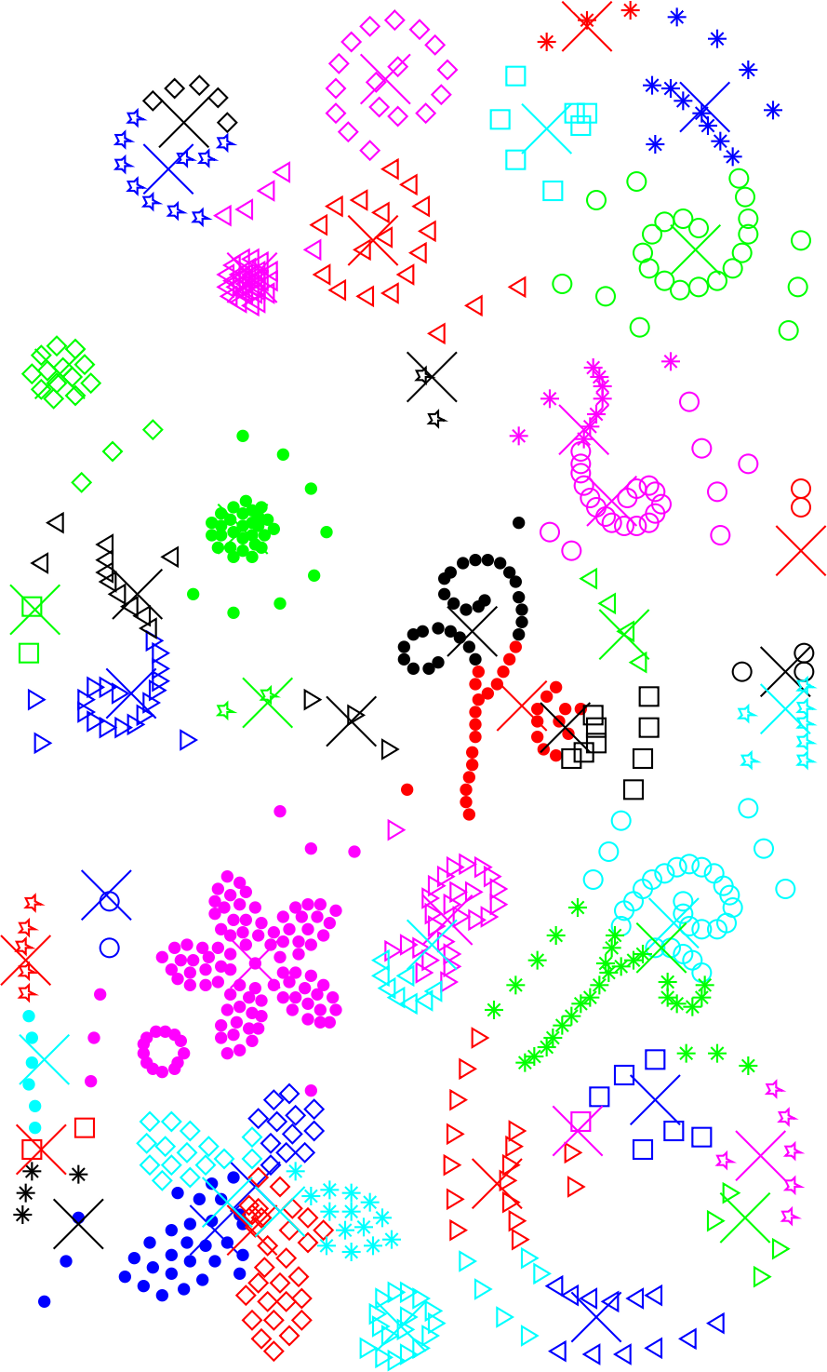}}
	\fbox{\includegraphics[width=.22\textwidth]{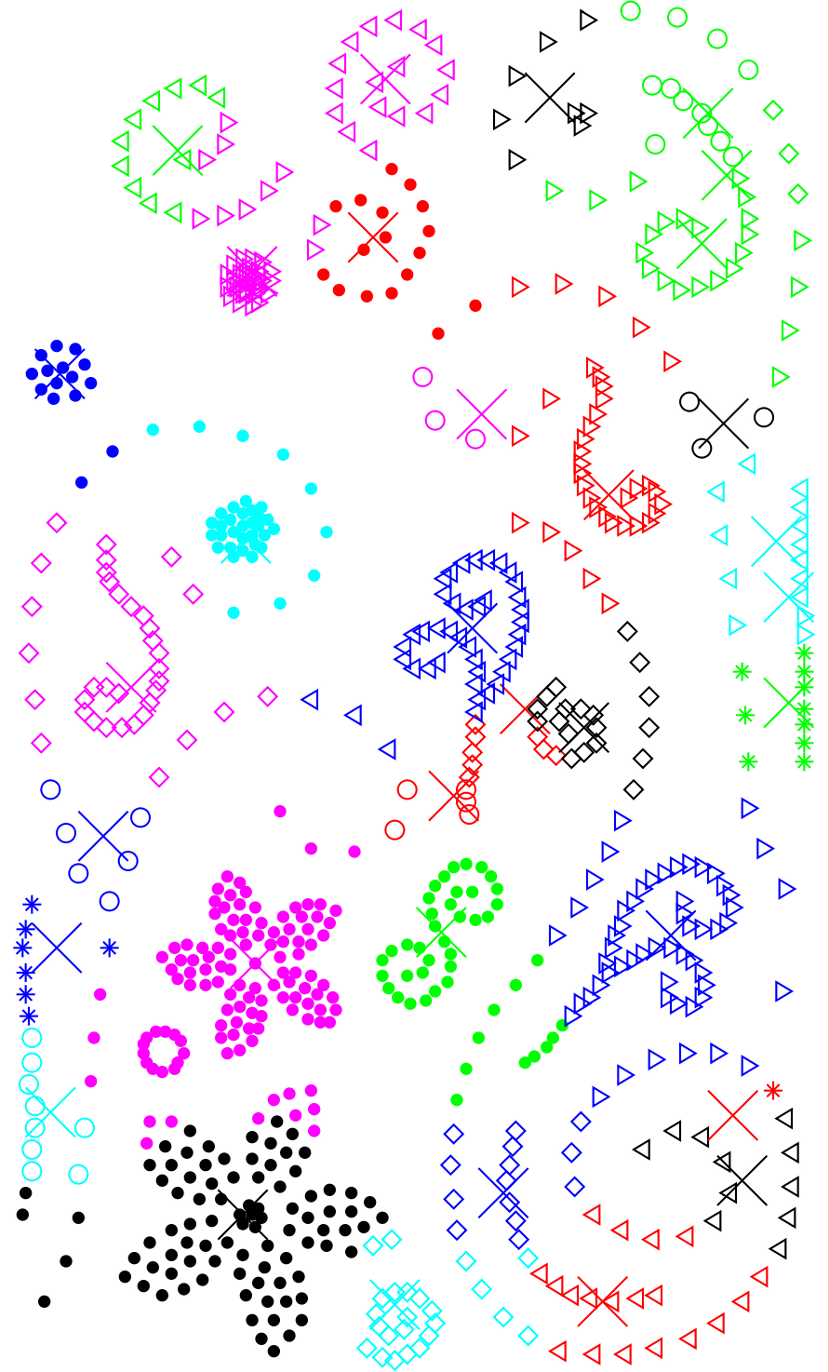}}
	\fbox{\includegraphics[width=.22\textwidth]{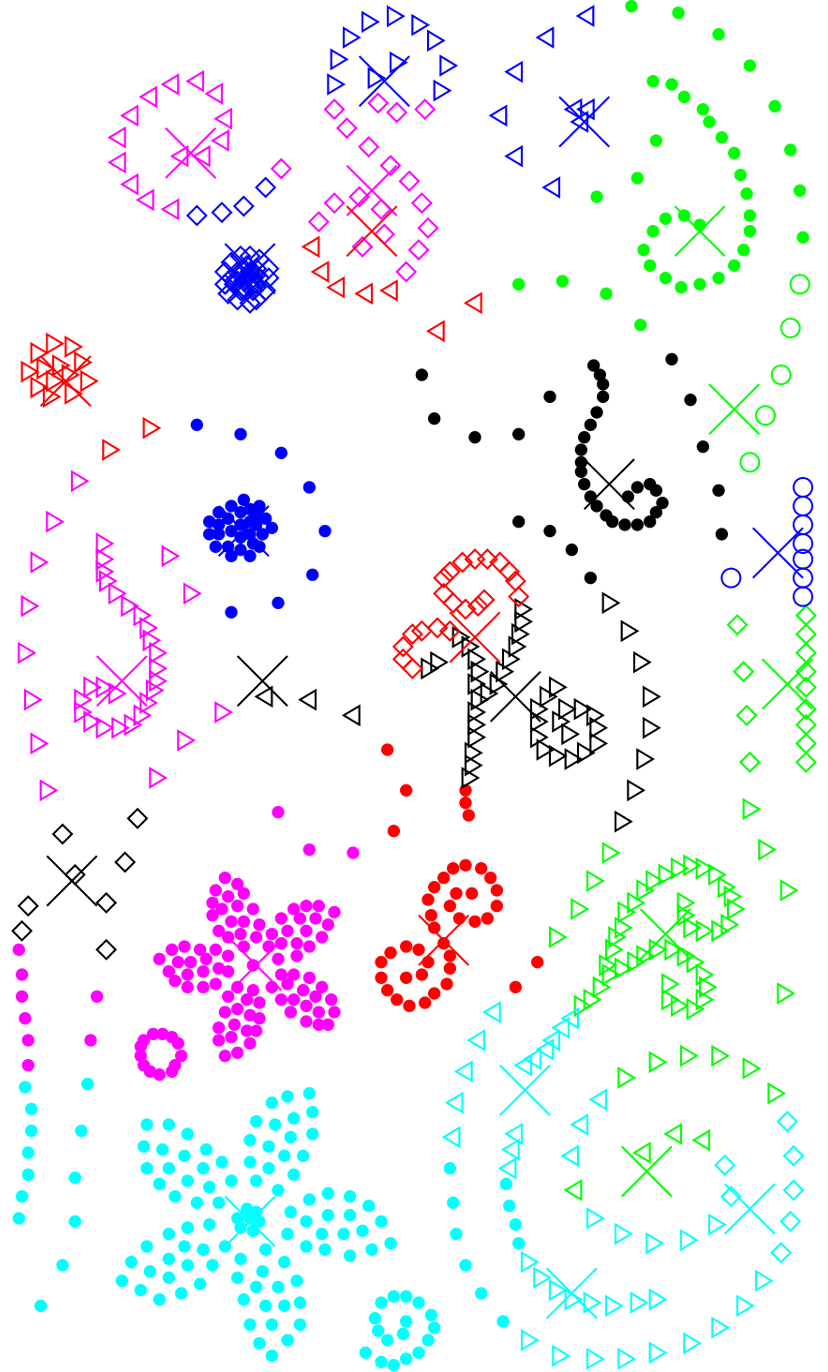}}
      }
    \end{minipage}
    \label{fig:expDotsMeanShift}
  }
  
  \caption{The same points configuration as in Figure~\ref{fig:dots}. At all scales, both algorithms fail to correctly detect the organization. Under and oversplitting occur in all cases.}
  \label{fig:expDotsPedroAndMeanShift}
\end{figure*}

\section{Handling MST Instability}
\label{sec:stabilization}

A seemingly obvious but interesting phenomenon occurs when noise is added to clustered data. Suppose we have data with two well separated clusters. In the absence of noise, it exists an MST edge linking both clusters. If noise is added to the data, the edge would probably disappear and be replaced by a sequence of edges. The length of the original linking edge is larger than the length of the endpoints of the sequence. The direct consequence is an increase in the NFA of the two clusters. Depending on the magnitude of that increase, the clusters might potentially be split into several proximal meaningful components. See Figure~\ref{fig:expStability}.

In short terms, noise affects the ideal topology of the MST. The oversplitting phenomenon can be corrected by iterating the following steps:
\begin{enumerate}
 \item detecting the meaningful clustered forest, \label{conceptualReplacementStart}
 \item add the union of points in the meaningful clustered forest to a new input dataset,
 \item remove the points in the meaningful clustered forest and replace them with noise, \label{conceptualReplacement}
 \item iterate until convergence, \label{conceptualReplacementEnd}
 \item re-detect the meaningful clustered forest on the new noise-free dataset built along all iterations.
\end{enumerate}
The MST of the set formed by merging the meaningful clustered forests from all iterations has the right topology. In other words this MST resembles the MST of the original data without noise. Then, detection of meaningful clustered forest can be performed without major trouble. We say that these detections form a stabilized meaningful clustered forest.

The above method implicitly contains a key issue in step~\ref{conceptualReplacement}. Detected points must be replaced with others which have a completely different distribution (i.e. the background distribution) but must occupy the same space. Figure~\ref{fig:holes} contains an example of the need for such a strong requirement. Pieces of background data might become ``disconnected, or to be precise connected by artificially created new edges. In one dimension, these holes are easily contracted, but when the dimensionality increases the contracting scheme gains more and more degrees of freedom.

\begin{figure*}[ht]
  \centering
  \begin{tabular}{@{\hspace{4pt}}c@{\hspace{4pt}}c@{\hspace{4pt}}}
    \begin{footnotesize}Input data\end{footnotesize} &
    \begin{footnotesize}MCF\end{footnotesize} \tabularnewline
    \includegraphics[width=.4\textwidth]{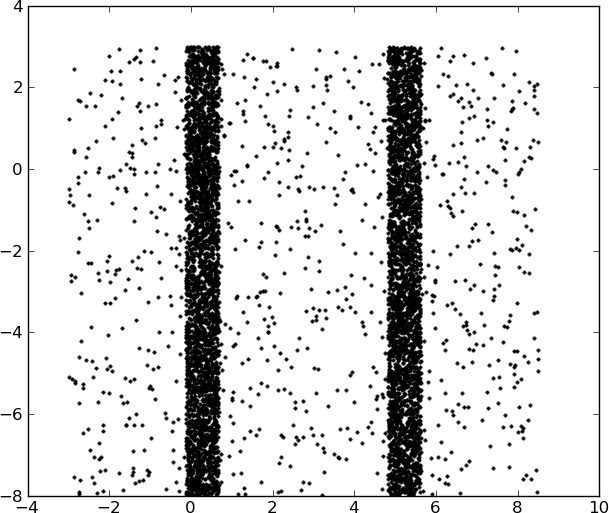} &
    \includegraphics[width=.4\textwidth]{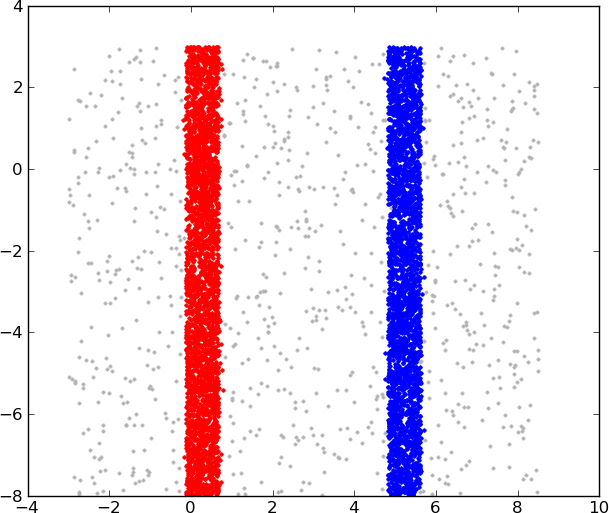} \tabularnewline
    \begin{footnotesize}Non-clustered data\end{footnotesize} &
    \begin{footnotesize}MST of non-clustered data\end{footnotesize} \tabularnewline
    \includegraphics[width=.4\textwidth]{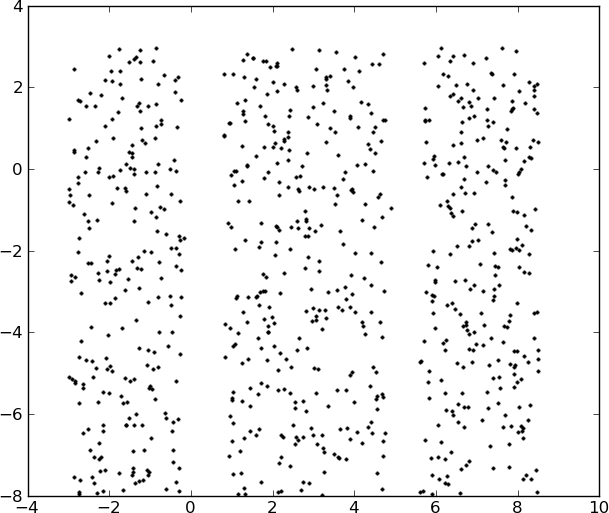} &
    \includegraphics[width=.4\textwidth]{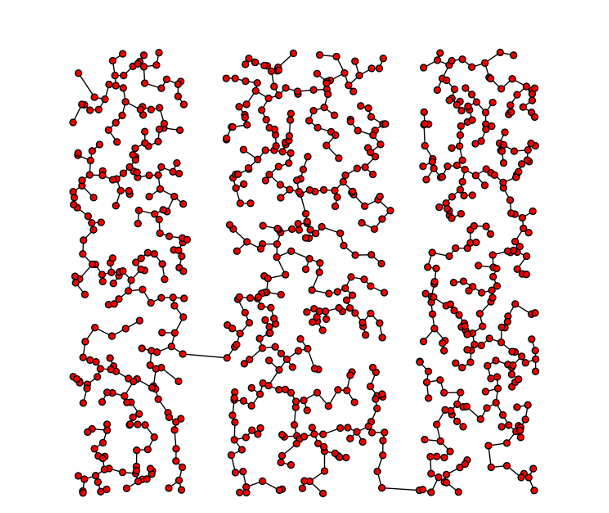}
  \end{tabular}

  \caption{Removing PMC can generate artifacts, i.e. holes, in the remaining data. These holes might create edges in the MST of the non-clustered data, that certainly violate the background model.}
  \label{fig:holes}
\end{figure*}

This noise filling procedure can be achieved by using the Voronoi diagram~\cite{voronoiSurvey} of the original point set. In the Voronoi diagram, each point lies on a different cell. To remove a point amounts to emptying a cell. Then the set of empty cells can be used as a sampling window to draw points from the background model. Notice that this procedure is actually generic since the Voronoi tesselation can be generalized to higher dimensional metric spaces~\cite{voronoiSurvey}.

The process simulates replacing detected components with noise from the background model. Due to the same nature of the Voronoi diagram, the process is not perfect: in the final iteration, the resulting point set is quasi but not exactly distributed according to the background model. A small bias is introduced, causing a few spurious detections in the MCF. To correct this issue it suffices to set $\eps = 10^{-2}$, as these detections have NFAs slightly lower than one and actual detections have really low NFAs. Of course this new thresholding could be avoided if a more accurate flling procedure was used.

Algorithm~\ref{algo:stabilize} illustrates steps~\ref{conceptualReplacementStart} to~\ref{conceptualReplacementEnd} of the correcting method. An example is shown in Figure~\ref{fig:stabilizeJain}, where four iterations are required until convergence.

\begin{algorithm}[ht]
\caption{Stabilize point set $X$ returning the set $\mathcal{F}$ of non-background points.}
\label{algo:stabilize}
\begin{algorithmic}[1]
  
  \STATE $\mathcal{F} \leftarrow \emptyset$
  \STATE $\mathcal{V} \leftarrow$ cells from Voronoi diagram of point set $X$ intersected with the minimum rectangle enclosing $X$.
  \STATE $X' \leftarrow X$
  \STATE $\mathcal{M} \leftarrow$ meaningful clustered forest of $X'$
  
  \WHILE{$\mathcal{M} \neq \emptyset$}
    
    \STATE $\mathcal{V}' \leftarrow \emptyset$
    \STATE $X' \leftarrow \emptyset$
    \FORALL{$C \in \mathcal{M}$}
      \FORALL{$\vect{p} \in C$}
	\STATE add $V \in \mathcal{V}$ to $\mathcal{V}'$ such that $\vect{p} \in V$.
	\IF{$\vect{p} \in X$}
	  \STATE add $\vect{p}$ to $X'$.
	  \STATE add $\vect{p}$ to $\mathcal{F}$.
	\ENDIF
      \ENDFOR
    \ENDFOR
    \STATE $\displaystyle a \leftarrow \sum_{V \in \mathcal{V}} \mathrm{area}(V)$
    \STATE $\displaystyle a_{\mathcal{M}} \leftarrow \sum_{V \in \mathcal{V}'} \mathrm{area}(V)$
    \STATE $\displaystyle n_{\mathcal{M}} \leftarrow \sum_{C \in \mathcal{M}} |C|$
    \STATE $\displaystyle n \leftarrow a_{\mathcal{M}} \cdot (|X| - N_{\mathcal{M}}) / (a - a_{\mathcal{M}})$
    \STATE $B \leftarrow$ draw $n$ points $\vect{q}_i$, $1 \leq i \leq n$, from the background model such that $(\exists V \in \mathcal{V}')\ q_i \in V$.
    \STATE $X' \leftarrow X' \cup B$
    \STATE $\mathcal{M} \leftarrow$ meaningful clustered forest of $X'$
    
  \ENDWHILE
\end{algorithmic}
\end{algorithm}

\begin{figure*}
  \centering
  \subfloat[In each iteration, the MCF is detected and the cells on the Voronoi diagram corresponding to points in the MCF are emptied and filled with points distributed according to the background model. In the fourth iteration, no MCF is detected and thus the algorithm stops.]{
  \begin{tabular}{@{\hspace{0pt}}m{.1in}@{\hspace{4pt}}m{.23\textwidth}@{\hspace{4pt}}m{.23\textwidth}@{\hspace{4pt}}m{.23\textwidth}@{\hspace{4pt}}m{.23\textwidth}@{\hspace{0pt}}}
    & \centering{\begin{footnotesize}Input data\end{footnotesize}} & \centering{\begin{footnotesize}MCF\end{footnotesize}} & \centering{\begin{footnotesize}MCF area\end{footnotesize}} & \centering{\begin{footnotesize}Replaced MCF\end{footnotesize}} \tabularnewline
    \begin{sideways}\begin{footnotesize}Iteration 1\end{footnotesize}\end{sideways} &
    \includegraphics[width=.23\textwidth]{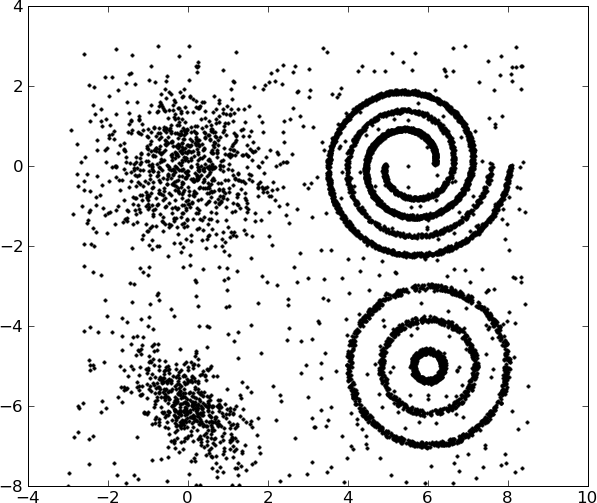} &
    \includegraphics[width=.23\textwidth]{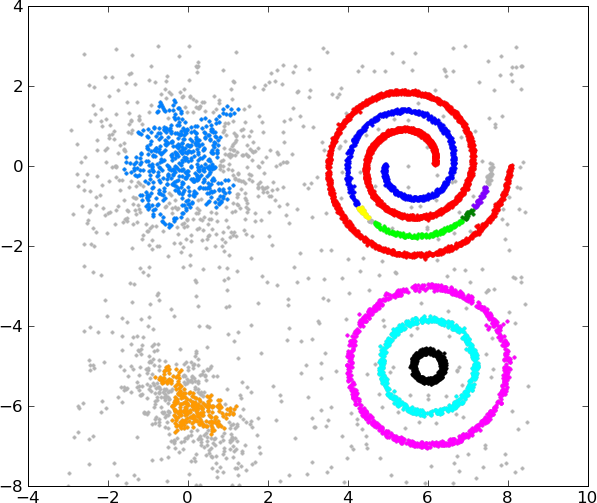} &
    \includegraphics[width=.23\textwidth]{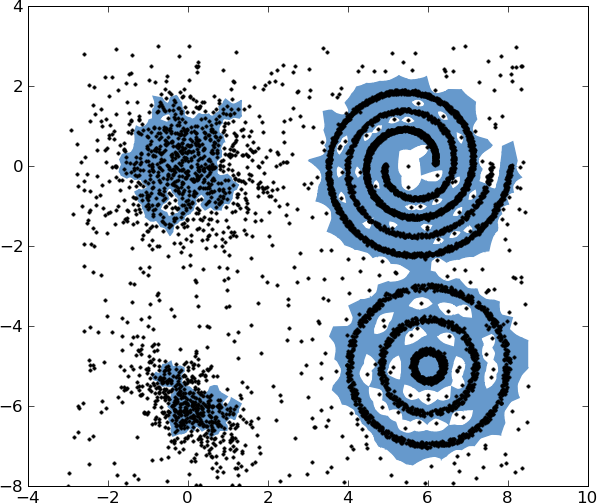} &
    \includegraphics[width=.23\textwidth]{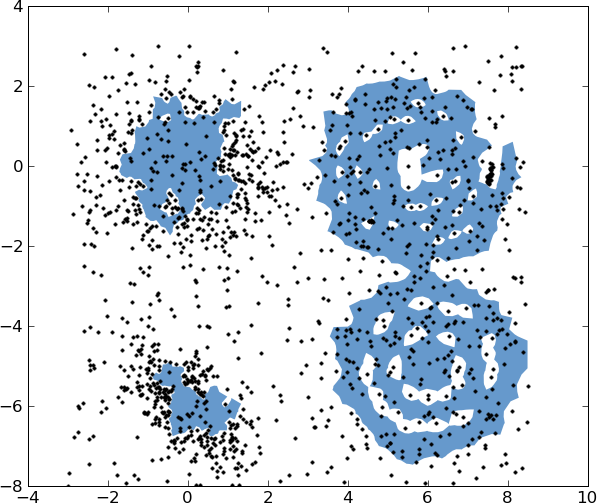} \tabularnewline
    
    \begin{sideways}\begin{footnotesize}Iteration 2\end{footnotesize}\end{sideways} &
    \includegraphics[width=.23\textwidth]{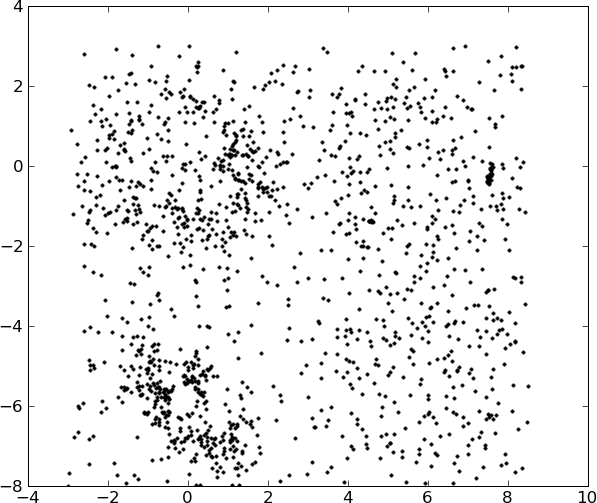} &
    \includegraphics[width=.23\textwidth]{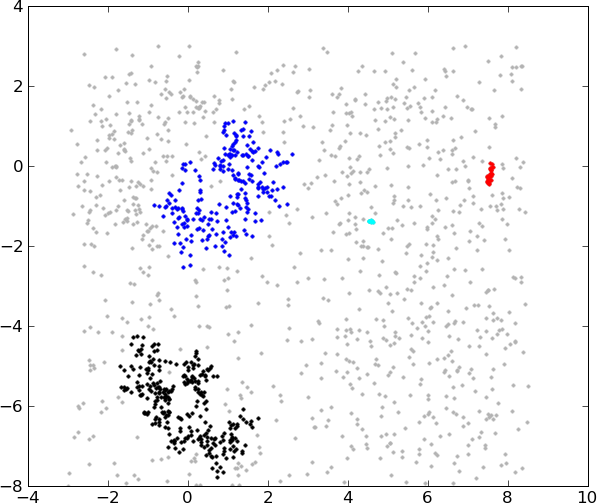} &
    \includegraphics[width=.23\textwidth]{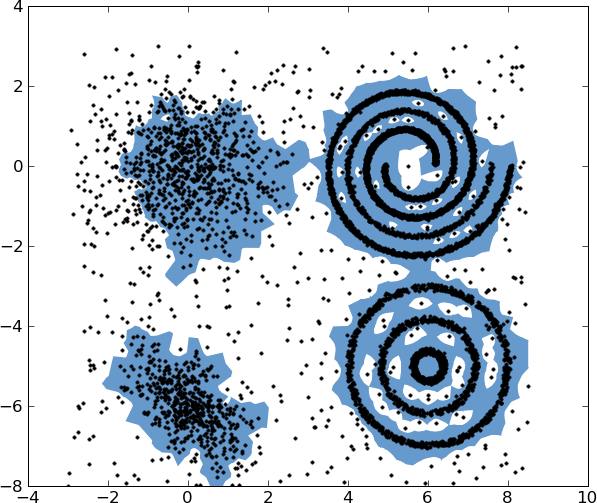} &
    \includegraphics[width=.23\textwidth]{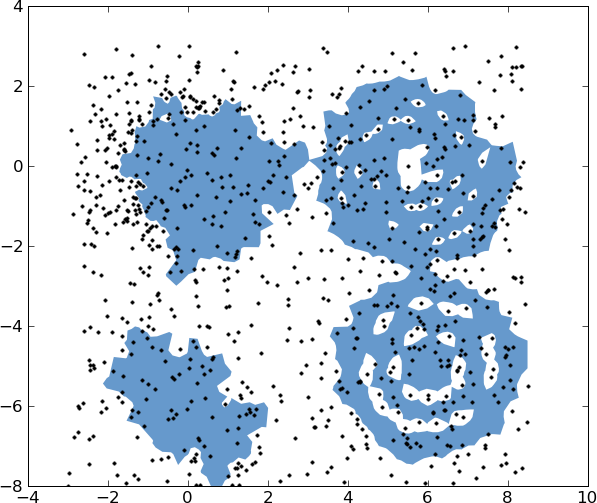} \tabularnewline
    
    \begin{sideways}\begin{footnotesize}Iteration 3\end{footnotesize}\end{sideways} &
    \includegraphics[width=.23\textwidth]{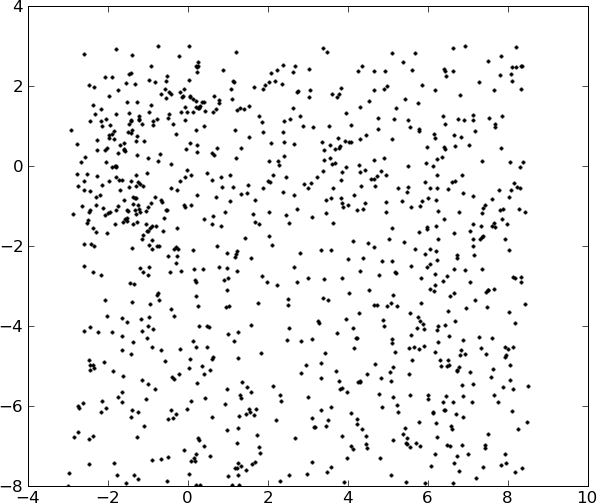} &
    \includegraphics[width=.23\textwidth]{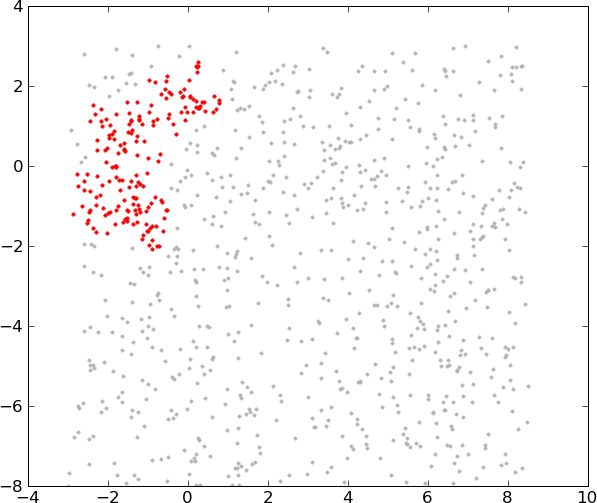} &
    \includegraphics[width=.23\textwidth]{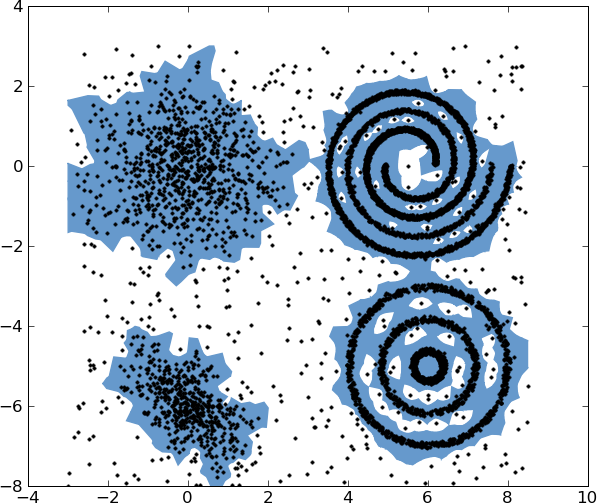} &
    \includegraphics[width=.23\textwidth]{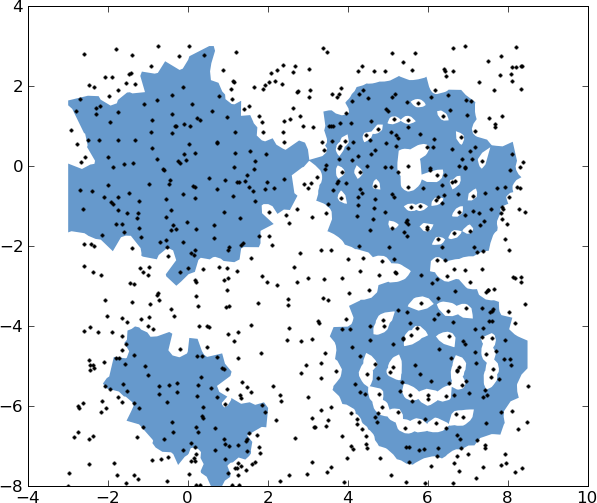} \tabularnewline
    
    \begin{sideways}\begin{footnotesize}Iteration 4\end{footnotesize}\end{sideways} &
    \includegraphics[width=.23\textwidth]{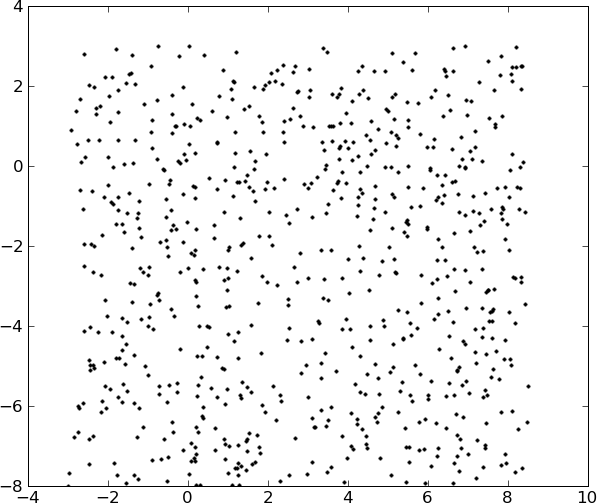} &
    \includegraphics[width=.23\textwidth]{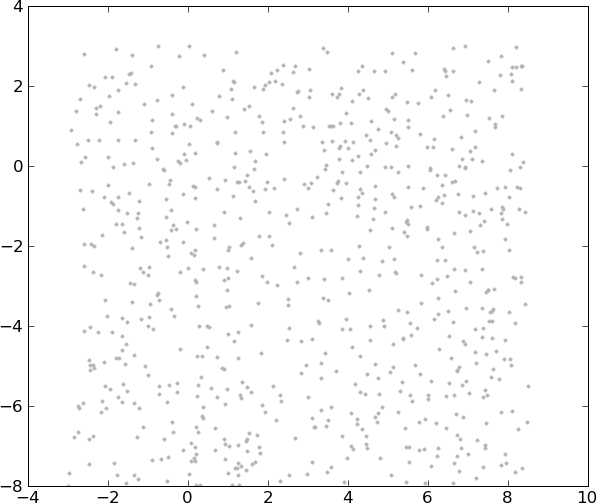} \tabularnewline
  \end{tabular}
  }
  
  \subfloat[Left, original Voronoi diagram. Right, resulting non-background points in red.]{
  \begin{minipage}{.98\textwidth}
    \centerline{
    \includegraphics[width=.3\textwidth]{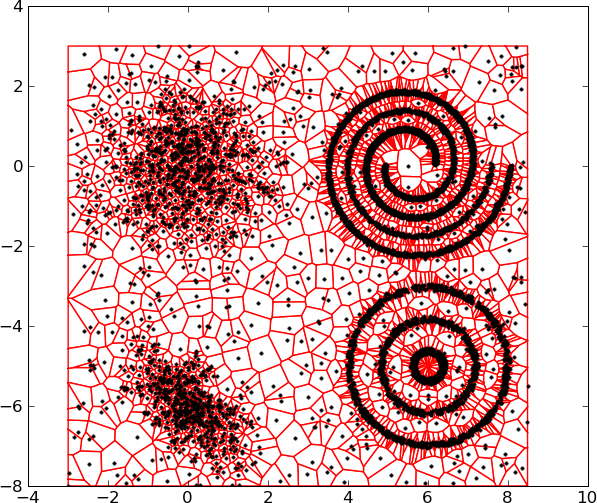}
    \includegraphics[width=.3\textwidth]{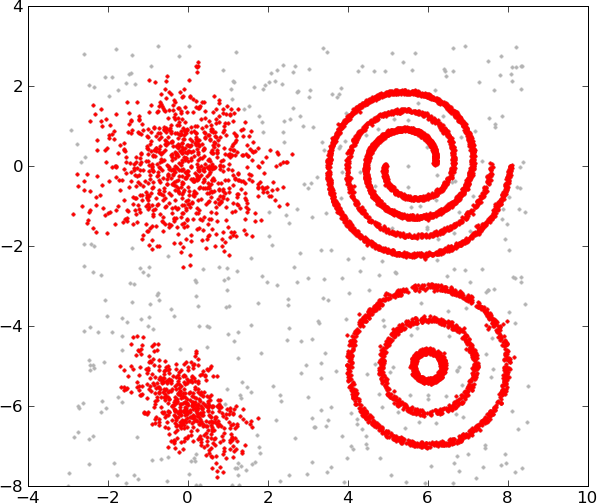}
    }
  \end{minipage}
  }
  
  \caption{Iteratively detecting the MCF and replacing it with points from the background model, converges and separates background from non-background data.}
  \label{fig:stabilizeJain}
\end{figure*}

Figure~\ref{fig:expStability} shows a second example of the stabilization process, followed by the detection of the stabilized meaningful clustered forest. The NFAs of the detected components are also included. The very low attained NFAs, account for the success of the procedure.

\begin{figure*}
  \centering{
  \begin{tabular}{cc}
    \subfloat[Input data]{\includegraphics[width=.45\textwidth]{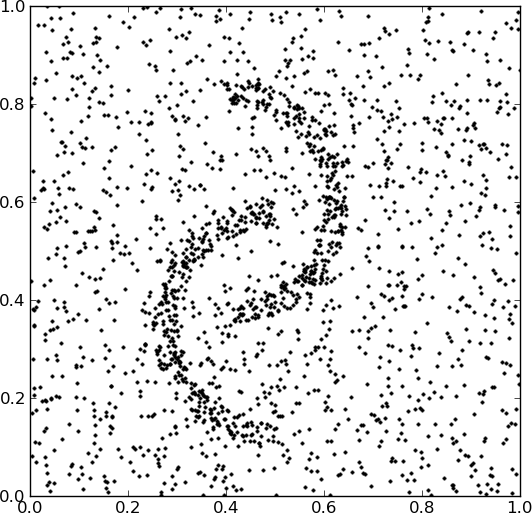}\label{fig:halfRingsUnstable}} &
    \subfloat[Desired clustering]{\includegraphics[width=.45\textwidth]{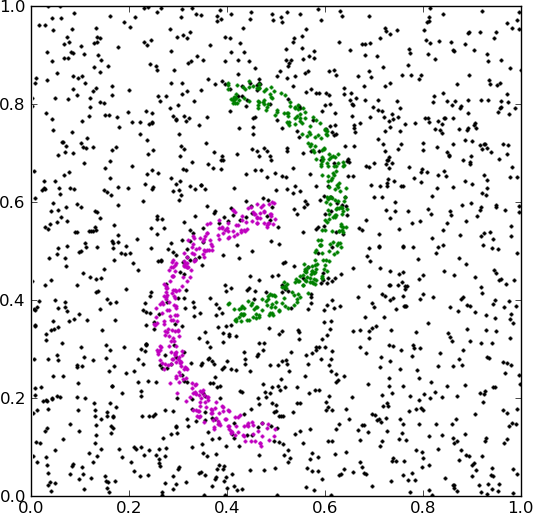}\label{fig:halfRingsUnstable-desired}} \tabularnewline
    \subfloat[Meaningful clustered forest]{
      \includegraphics[width=.45\textwidth]{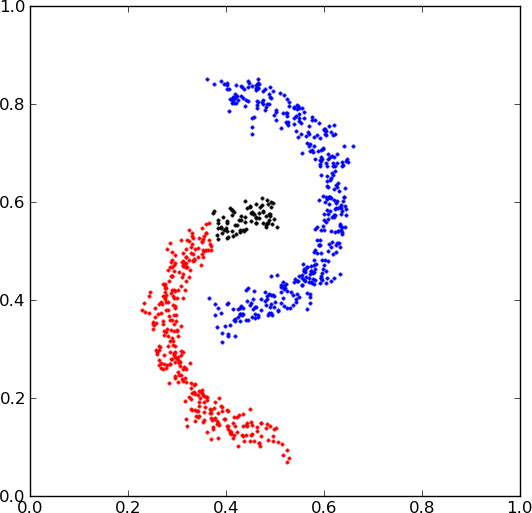}\label{fig:halfRingsUnstable-maximalMeaningful}
      \put(-70, 30){\begin{footnotesize}$\NFA \approx 10^{-11.2}$\end{footnotesize}}
      \put(-45, 38){\vector(-1, 0){50}}
      \put(-58, 130){\begin{footnotesize}$\NFA \approx 10^{-7.5}$\end{footnotesize}}
      \put(-33, 128){\vector(-2, -1){25}}
      \put(-140, 105){\begin{footnotesize}$\NFA \approx 10^{-7.2}$\end{footnotesize}}
      \put(-115, 103){\vector(3, -1){30}}
    } &
    \subfloat[Stabilized clustered forest]{
      \includegraphics[width=.45\textwidth]{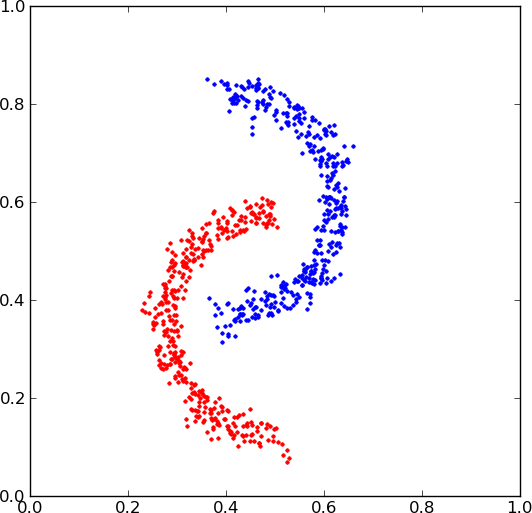}\label{fig:halfRingsUnstable-stabilizedMaximalMeaningful}
      \put(-70, 30){\begin{footnotesize}$\NFA \approx 10^{-55.8}$\end{footnotesize}}
      \put(-45, 38){\vector(-1, 0){50}}
      \put(-60, 128){\begin{footnotesize}$\NFA \approx 10^{-29.3}$\end{footnotesize}}
      \put(-35, 128){\vector(-2, -1){25}}
    }
  \end{tabular}
  }

  \caption{Noise might affect the stability of the meaningful clustered forest, causing to oversplit the clusters. Algorithm~\ref{algo:stabilize} converges in two iterations. Then, one can detect the meaningful clustered forest among non-background points, yielding a stabilized meaningful clustered forest.}
  \label{fig:expStability}
\end{figure*}

\section{The Masking Challenge}
\label{sec:masking}

In 2009, Jain~\cite{jain09} stated that no available algorithm could cluster the dataset in Figure~\ref{fig:jainImpossible3} and obtain the result in Figure~\ref{fig:jainImpossible3-desired}.
The dataset is interesting because it brings to light a new phenomenon: a cluster with many points can ``dominate'' the scene and hide other clusters that could be meaningful.

A similar behavior occurs in vanishing point detection, as pointed out by Almansa~\etal~\cite{almansa03}. A vanishing point is a point in an image to which parallel line segments not frontoparallel appear to converge; in some sense one can regard this point as a collection of such parallel line segments. Sometimes this procedure will still miss some weak vanishing points which are ``masked'' by stronger vanishing points composed of much more segments. These may not be perceived at first sight, but only if we manage to unmask them by getting rid of the ``clutter'' in one way or another. Almansa~\etal propose to eliminate these detected vanishing points and look for new vanishing points among the remaining line segments.

In our case, this very same approach cannot be followed. Masked clusters are not completely undetected but partially detected. Removing such cluster parts and re-detecting would cause oversegmentation. We propose instead to only remove the most meaningful proximal component and then iterate. The process ends when the masking phenomenon disappears, that is:
\begin{itemize}
 \item when there are no unclustered points, or
 \item no MCF is detected.
\end{itemize}
Algorithm~\ref{algo:unmasking} shows a detail of this unmasking scheme. Summarizing, first non-background points are detected using the stabilization process in Algorithm~\ref{algo:stabilize} and then the unmasking process takes place.

The detection of unmasked MMCs in Figure~\ref{fig:jainImpossible3-unmaskedMaximalMeaningful} correct all masking issues. Moreover, they are extremely similar to the desired clustering in Figure~\ref{fig:jainImpossible3-desired}. The difference is that clusters absorb background points that are within of near them. Indeed, these background points are statistically indistinguishable from the points from the cluster that absorbs them.

\begin{algorithm}
\caption{Compute the unmasked meaningful clustered forest $\mathcal{U}$ from the point set $\mathcal{F}$ of non-background points.}
\label{algo:unmasking}
\begin{algorithmic}[1]
  
  \STATE $\mathcal{U} \leftarrow \emptyset$
  
  \WHILE{$\mathcal{M} \neq \emptyset$}
    \STATE $\mathcal{M} \leftarrow$ stabilized meaningful clustered forest of $\mathcal{F}$
    \IF{$\displaystyle |F| = \sum_{C \in \mathcal{M}} |C|$}
      \STATE $\forall C \in \mathcal{M}$, add $C$ to $\mathcal{U}$.
    \ELSE
      \STATE $\displaystyle C_{\min} \leftarrow \argmin_{C \in \mathcal{M}} \NFA(C)$
      \FORALL{$\vect{p} \in C_{\min}$}
	\STATE remove $p$ from $\mathcal{F}$
      \ENDFOR
      \STATE add $C_{\min}$ to $\mathcal{U}$.
    \ENDIF
  \ENDWHILE
\end{algorithmic}
\end{algorithm}

From a total number of 7000 points in Figure~\ref{fig:jainImpossible3}, the outer spiral (in orange in in Figure~\ref{fig:jainImpossible3-desired}) has 2514 points, i.e. almost 36\% of the points. The detection of the unmasked MCF in Figure~\ref{fig:jainImpossible3-unmaskedMaximalMeaningful} correct all masking issues. Moreover, they are extremely similar to the desired clustering in Figure~\ref{fig:jainImpossible3-desired}. The difference is that clusters absorb background points that are within of near them. Indeed, these background points are statistically indistinguishable from the points from the cluster that absorbs them.

\begin{figure*}
  \centering
  \subfloat[In each iteration, the MCF is detected and the most meaningful component is removed from the dataset. In the sixth iteration, all points are clustered and thus the algorithm stops.]{
  \begin{tabular}{@{\hspace{0pt}}c@{\hspace{4pt}}c@{\hspace{4pt}}c@{\hspace{4pt}}c@{\hspace{4pt}}c@{\hspace{4pt}}c@{\hspace{4pt}}c@{\hspace{0pt}}}
    & \begin{footnotesize}Iteration 1\end{footnotesize} & \begin{footnotesize}Iteration 2\end{footnotesize} & \begin{footnotesize}Iteration 3\end{footnotesize} & \begin{footnotesize}Iteration 4\end{footnotesize} & \begin{footnotesize}Iteration 5\end{footnotesize} & \begin{footnotesize}Iteration 6\end{footnotesize} \\
    \raisebox{.15in}{\begin{sideways}\begin{footnotesize}Input data\end{footnotesize}\end{sideways}} &
    \includegraphics[width=.15\textwidth]{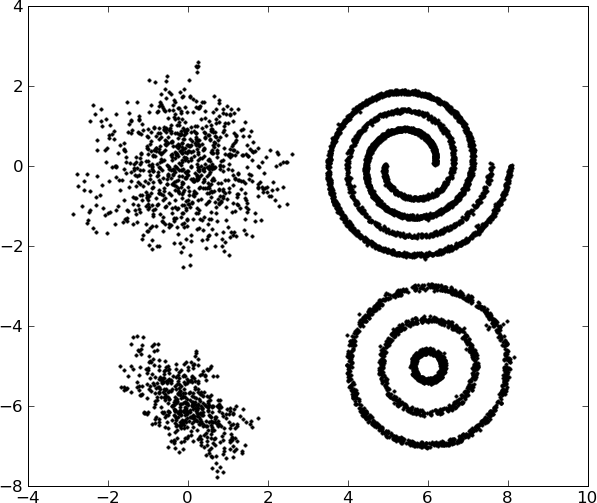} &
    \includegraphics[width=.15\textwidth]{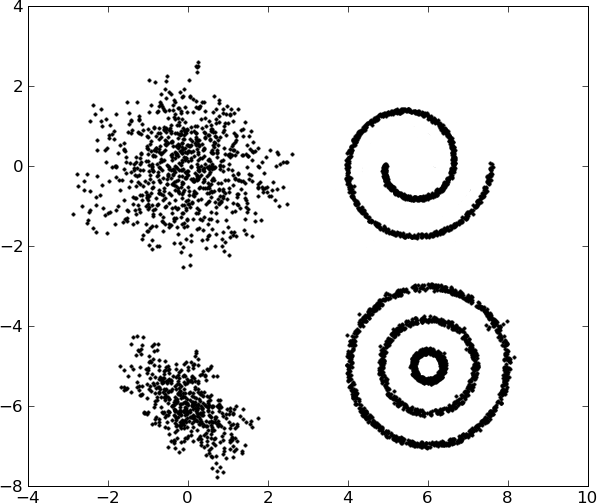} &
    \includegraphics[width=.15\textwidth]{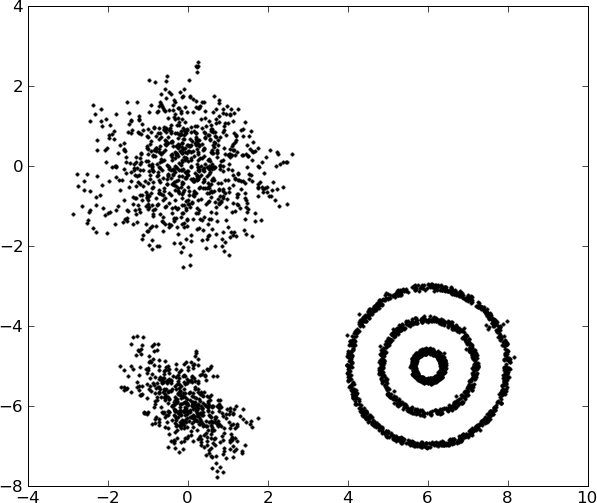} &
    \includegraphics[width=.15\textwidth]{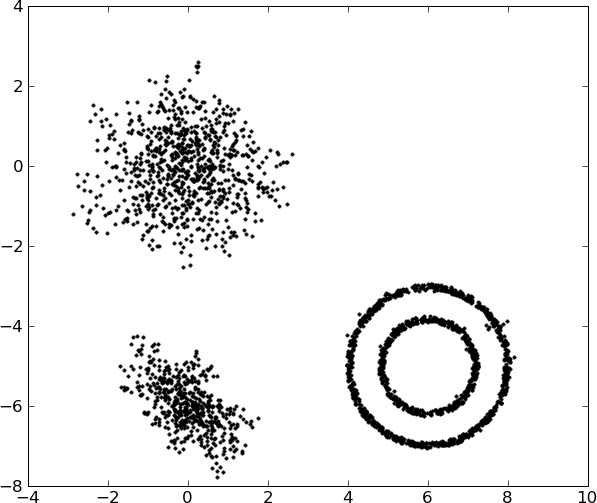} &
    \includegraphics[width=.15\textwidth]{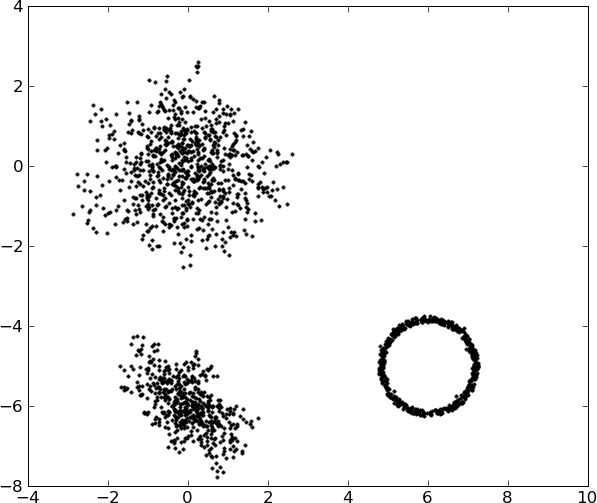} &
    \includegraphics[width=.15\textwidth]{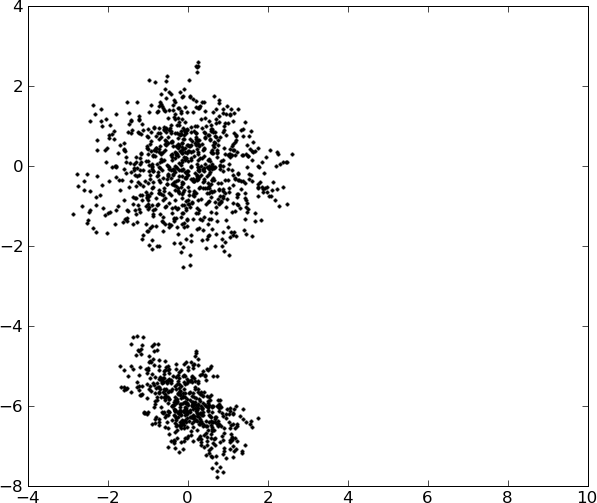} \\
    
    \raisebox{.32in}{\begin{sideways}\begin{footnotesize}MCF\end{footnotesize}\end{sideways}} &
    \includegraphics[width=.15\textwidth]{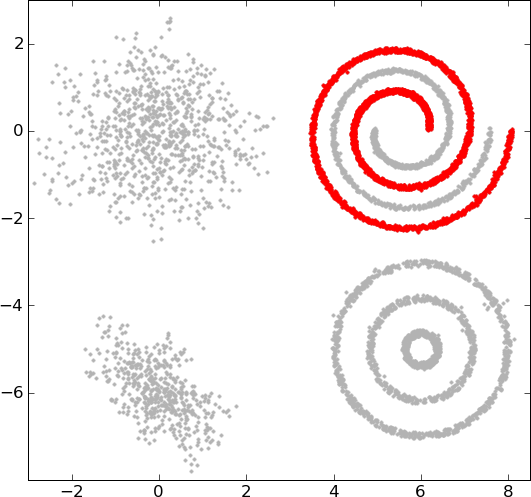} &
    \includegraphics[width=.15\textwidth]{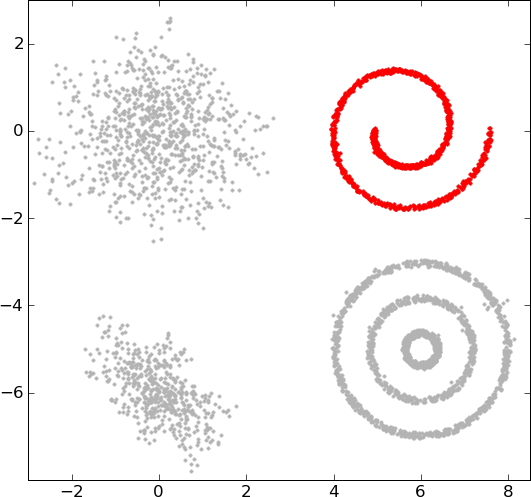} &
    \includegraphics[width=.15\textwidth]{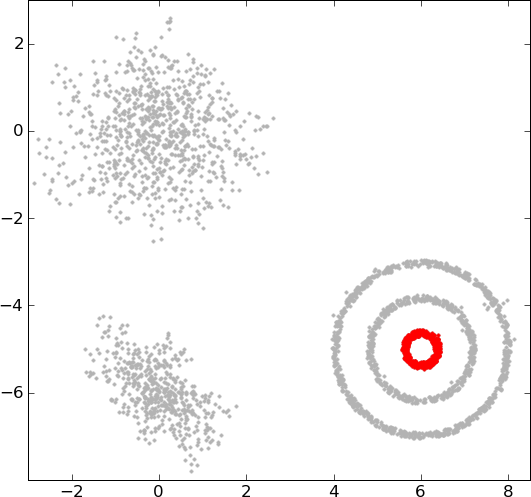} &
    \includegraphics[width=.15\textwidth]{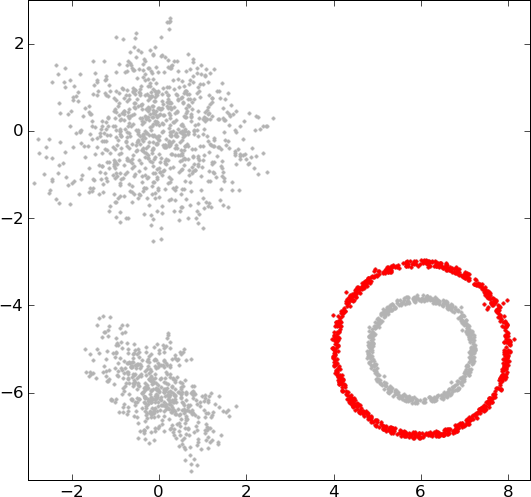} &
    \includegraphics[width=.15\textwidth]{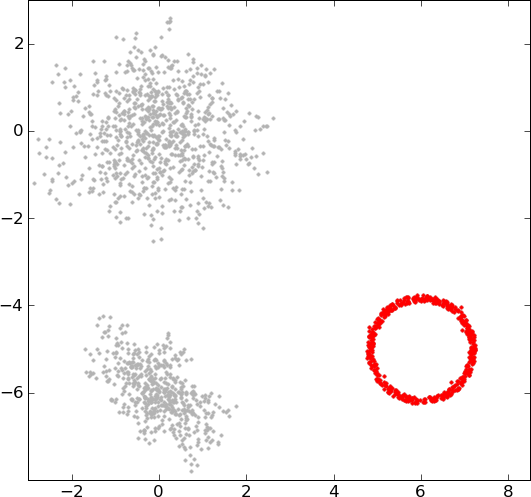} &
    \includegraphics[width=.15\textwidth]{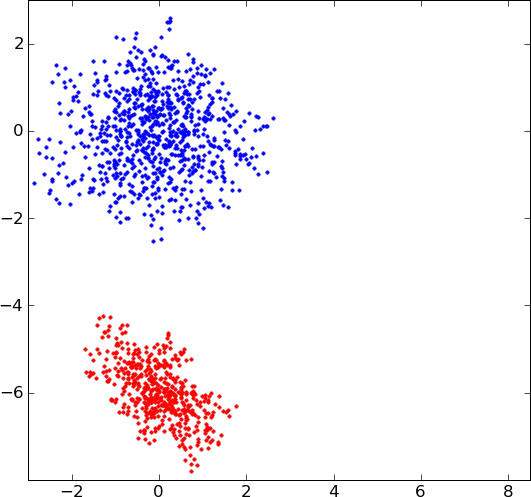}
  \end{tabular}
  }
  
  \begin{tabular}{@{\hspace{0pt}}c@{\hspace{4pt}}c@{\hspace{4pt}}c@{\hspace{0pt}}}
    \subfloat[Input data]{
      \includegraphics[width=.3\textwidth]{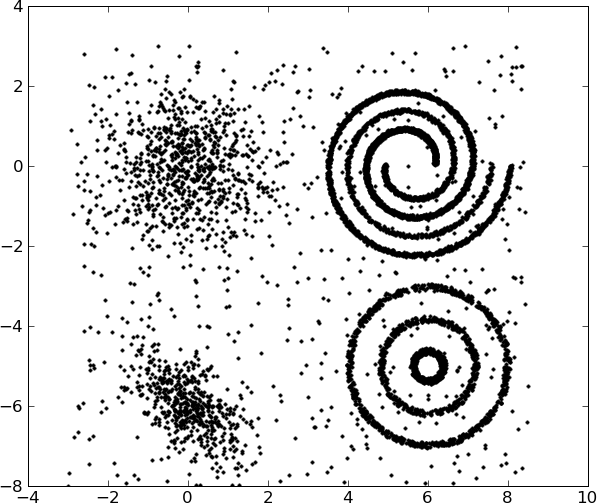}
      \label{fig:jainImpossible3}
    } &
    \subfloat[Desired clustering]{
      \includegraphics[width=.3\textwidth]{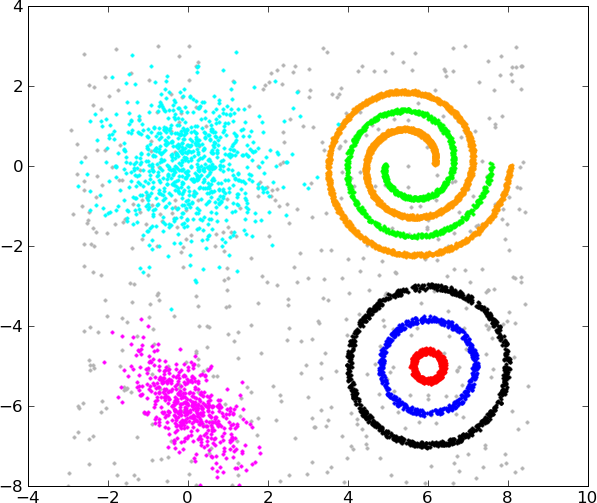}
      \label{fig:jainImpossible3-desired}
    } &s
    \subfloat[Unmasked MCF.]{
      \includegraphics[width=.3\textwidth]{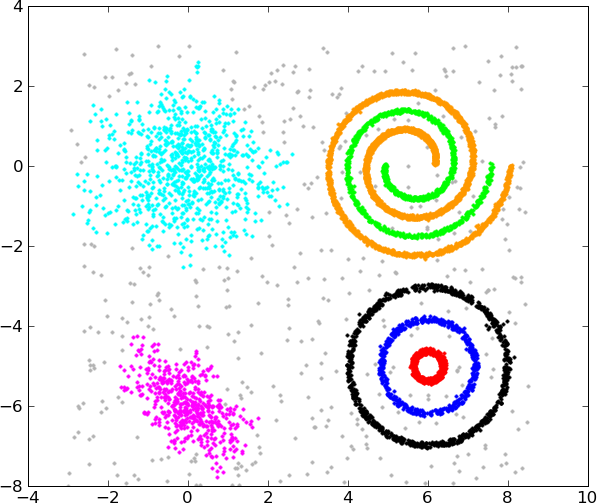}
      \label{fig:jainImpossible3-unmaskedMaximalMeaningful}
    }
  \end{tabular}
  
  \caption{Iteratively detecting the MCF and removing from the dataset its most meaningful component, converges and corrects the masking phenomenon. The detected MCF is extremely similar to the desired clustering. The difference is that clusters absorb background points that are within of near them.}
  \label{fig:unmaskJain}
\end{figure*}

\section{Three-dimensional point clouds}
\label{sec:3dPointClouds}

We tested the proposed algorithm with three-dimensional point clouds. We put two point clouds in the same scene at different positions, thus building two scenes in Figures~\ref{fig:clouds-scene1} and~\ref{fig:clouds-scene5}. In both cases uniformly distributed noise was artificially added. The skeleton hand and the bunny are formed by 3274 and by 3595, respectively. In Figure~\ref{fig:clouds-scene1}, 3031 noise points were added to total 9900 points. In Figure~\ref{fig:clouds-scene1}, 7031 noise points were added to total 13900 points and both shapes were positioned closer to each other and in such a way that no linear separation exist between them. In both cases the result is correct

In Figure~\ref{fig:clouds-scene1}, the MCF is oversplit but the stabilization process discussed in Section~\ref{sec:stabilization} corrects the issue. In Figure~\ref{fig:clouds-scene1}, although the same phenomenon is possible, it does not occur in this realization of the noise process.

\begin{figure*}
  \centering
  \begin{tabular}{@{\hspace{0pt}}m{.11in}@{\hspace{4pt}}m{.33\textwidth}@{\hspace{4pt}}m{.33\textwidth}@{\hspace{0pt}}}
    & \centering{\begin{footnotesize}View 1\end{footnotesize}} & \centering{\begin{footnotesize}View 2\end{footnotesize}} \tabularnewline
    \begin{sideways}\begin{footnotesize}Input data\end{footnotesize}\end{sideways} &
    \includegraphics[width=.33\textwidth]{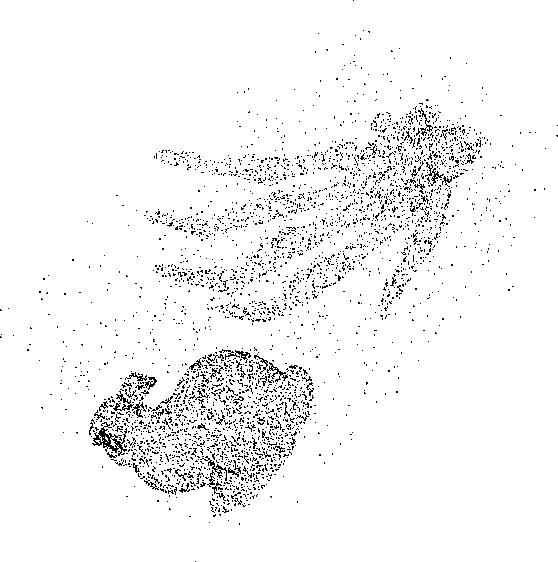} &
    \includegraphics[width=.33\textwidth]{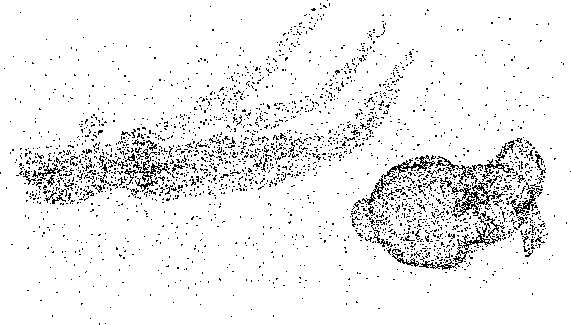} \tabularnewline
    
    \begin{sideways}\begin{footnotesize}Ground truth\end{footnotesize}\end{sideways} &
    \includegraphics[width=.33\textwidth]{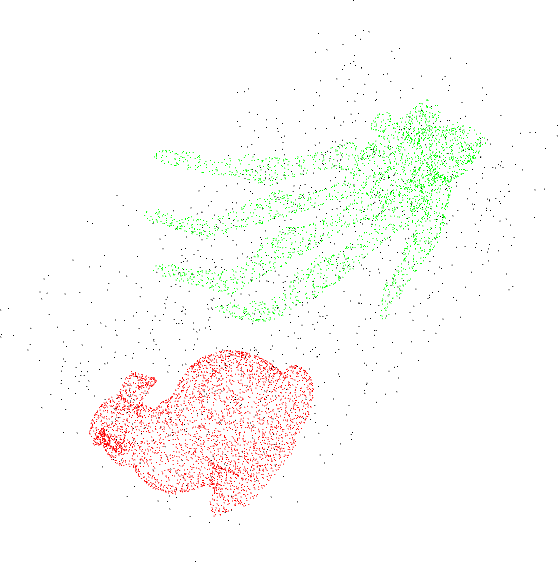} &
    \includegraphics[width=.33\textwidth]{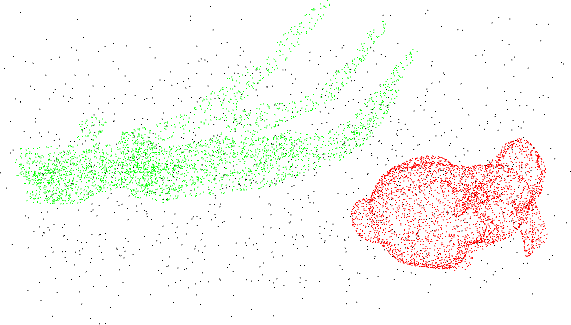} \tabularnewline

    \begin{sideways}\begin{footnotesize}MCF\end{footnotesize}\end{sideways} &
    \includegraphics[width=.33\textwidth]{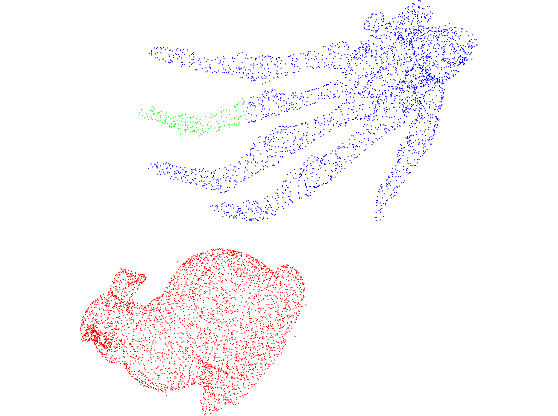} &
    \includegraphics[width=.33\textwidth]{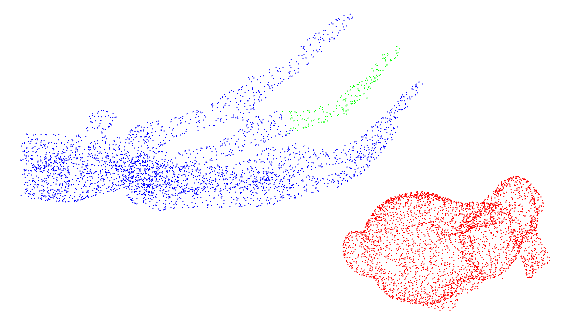} \tabularnewline
    
    \begin{sideways}\begin{footnotesize}Stabilized MCF\end{footnotesize}\end{sideways} &
    \includegraphics[width=.33\textwidth]{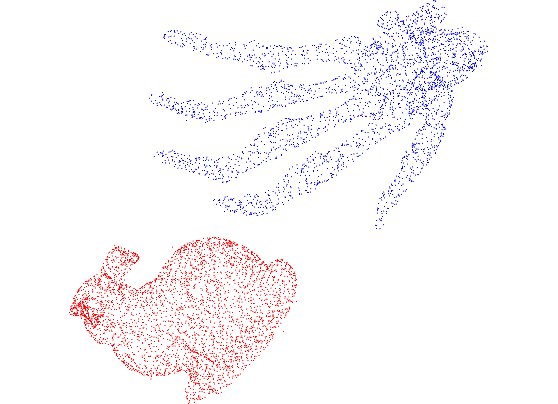} &
    \includegraphics[width=.33\textwidth]{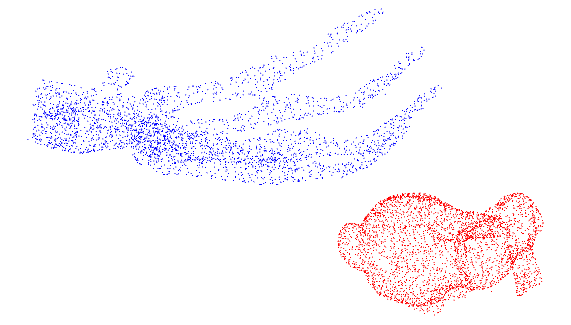} \tabularnewline
  \end{tabular}
  
  \caption{Two point clouds with artificially added noise. In this case, noise perturbed the MCF (see the finger of the skeleton hand). This effect is corrected by the stabilization process.}
  \label{fig:clouds-scene1}
\end{figure*}

\begin{figure*}
  \centering
  \begin{tabular}{@{\hspace{0pt}}m{.11in}@{\hspace{4pt}}m{.35\textwidth}@{\hspace{4pt}}m{.35\textwidth}@{\hspace{0pt}}}
    & \centering{\begin{footnotesize}View 1\end{footnotesize}} & \centering{\begin{footnotesize}View 2\end{footnotesize}} \tabularnewline
    \begin{sideways}\begin{footnotesize}Input data\end{footnotesize}\end{sideways} &
    \includegraphics[width=.35\textwidth]{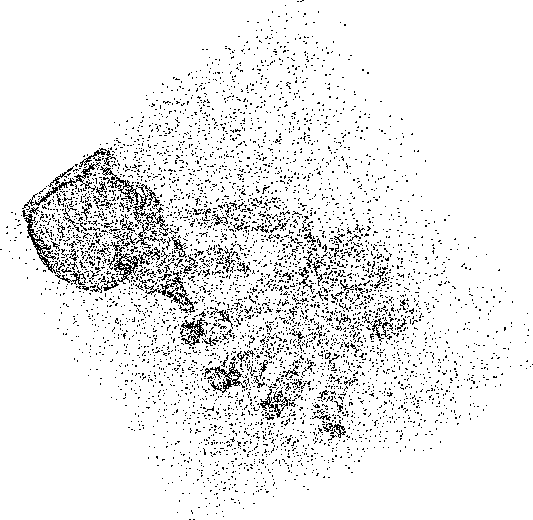} &
    \includegraphics[width=.35\textwidth]{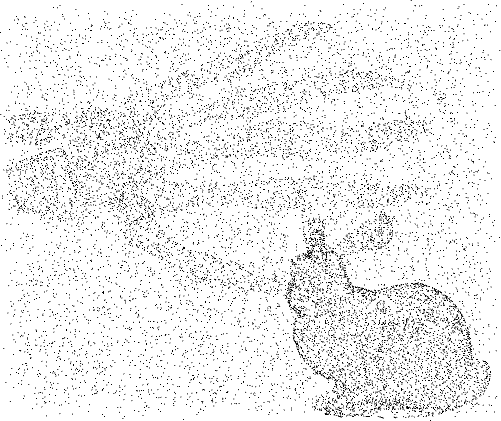} \tabularnewline
    
    \begin{sideways}\begin{footnotesize}Ground truth\end{footnotesize}\end{sideways} &
    \includegraphics[width=.35\textwidth]{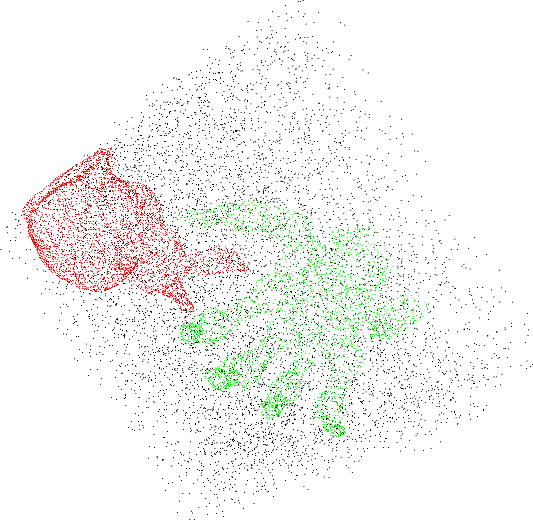} &
    \includegraphics[width=.35\textwidth]{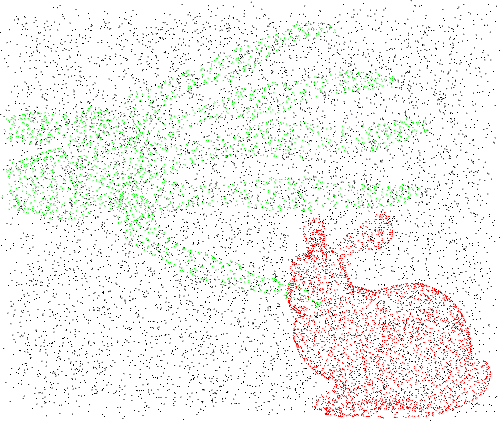} \tabularnewline

    \begin{sideways}\begin{footnotesize}MCF\end{footnotesize}\end{sideways} &
    \includegraphics[width=.35\textwidth]{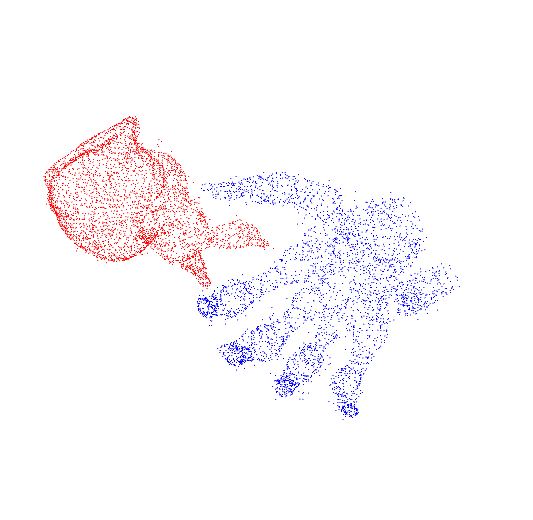} &
    \includegraphics[width=.35\textwidth]{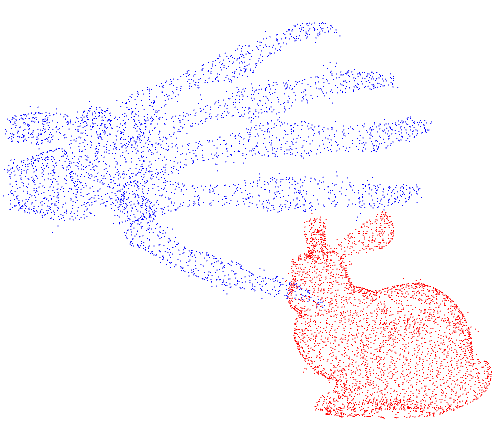} \tabularnewline
  \end{tabular}
  
  \caption{Two point clouds with artificially added noise. Both shapes are close to each other and are not linearly separable. The result of the stabilization process is omitted as detections do not change.}
  \label{fig:clouds-scene5}
\end{figure*}

\section{Final Remarks}
\label{sec:mstClusteringConclusions}

In this work we propose a new clustering method that can be regarded as a numerical method to compute the proximity gestalt. The method relies on analyzing edge distances in the MST of the dataset. The direct consequence is that our approach is fully parametric on the chosen distance.

The proposed method present several novelties over other MST-based formulations.
Some formulations have preference for compact clusters as they extract their clustering detection rule from characteristics that are not intrinsic to the MST. Our method only focuses on the length of the MST edges; hence, it does not present such preference.
Other formulations analyze the data at a fixed local scale, thus introducing a new method parameter. We have shown through examples that these local methods can fail when the input data has clusters with different sizes and densities. In these same examples, our method perform well without the need of introducing any extra parameter.

The method is also automatic, in the sense that only a single parameter is left to the user. This parameter has an intuitive interpretation as it controls the expected number of false detections. Moreover, setting it to $1$ is sufficient in practice.

Robustness to noise is an additional but essential feature of the method. Indeed, we have shown that the iterative application of our method can be used to treat noisy data, producing quality results.

We also studied the masking phenomenon in which a highly populated and salient cluster dominates the scene and inhibits the detection of less-populated, but still salient, clusters. The proposed method can be iteratively used to avoid such inhibitions from happening, yielding promising results.

As future work, it would be interesting to study the MST edge distribution under different point distributions. From the theoretical point of view, it can bring light to the method correctness. In practice, it would allow to replace the simulated background models by their analytical counterparts.
 
\appendix

\section{Proof of Proposition~\ref{prop:expectation}}
\label{sec:proofs}

The proof relies on the following classical lemma.
\begin{lemma}
  Let $X$ be a real random variable and let $F(x) = P(X \leq x)$ be the cumulative density function of $X$. Then for all $t \in (0, 1)$,
  \begin{equation}
    \Pr ( F(X) < t) \leq t
  \end{equation}
  \label{lem:classic}
\end{lemma}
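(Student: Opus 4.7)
The plan is to prove this via the classical generalized-inverse argument, which handles both the continuous and the atomic case uniformly. Define the quantile function
\[
F^{-1}(t) \stackrel{\mathrm{def}}{=} \inf\{x \in \mathbb{R} : F(x) \geq t\},
\]
which is well defined for each $t \in (0,1)$. The proof reduces to establishing the event-equivalence
\[
\{F(X) < t\} = \{X < F^{-1}(t)\}
\]
and then bounding the probability of the right-hand side by $t$.

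First I would establish the equivalence. If $F(X) < t$, then $X$ does not belong to the set $\{x : F(x) \geq t\}$, and by right-continuity of $F$ this set is a closed half-line $[F^{-1}(t), \infty)$, so $X < F^{-1}(t)$. Conversely, if $X < F^{-1}(t)$, then by the very definition of the infimum, $F(X) < t$ (if instead $F(X) \geq t$ held, $X$ would belong to $\{x : F(x) \geq t\}$ and would have to be at least $F^{-1}(t)$). Both implications use only the monotonicity and right-continuity of $F$, no density assumption.

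Next I would bound $\Pr(X < F^{-1}(t))$. Since this is the probability of a strict inequality, it equals the left limit
\[
F\bigl(F^{-1}(t)^-\bigr) = \lim_{x \uparrow F^{-1}(t)} F(x).
\]
Every $x < F^{-1}(t)$ satisfies $F(x) < t$ by definition of the infimum, and taking the supremum over such $x$ yields $F(F^{-1}(t)^-) \leq t$. Chaining the two steps gives $\Pr(F(X) < t) \leq t$, as required.

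The main technical subtlety (and really the only one) is the case where $F$ has an atom at $F^{-1}(t)$ itself: then $F(F^{-1}(t)) \geq t$ but $F(F^{-1}(t)^-) < t$, so the inequality is strict. This is exactly why the statement is $\leq t$ rather than an equality, and it is the reason one must be careful to use the strict inequality $F(X) < t$ and the left-limit $F(F^{-1}(t)^-)$ rather than $F(F^{-1}(t))$. Apart from this bookkeeping, every step is a direct consequence of monotonicity and right-continuity of the cumulative distribution function.
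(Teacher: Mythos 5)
Your proof is correct. Note that the paper itself gives no proof of this lemma at all --- it is simply stated as ``classical'' and used in the proof of Proposition~\ref{prop:expectation} --- so there is no argument in the paper to compare against. Your quantile-function argument, establishing the event identity $\{F(X) < t\} = \{X < F^{-1}(t)\}$ via monotonicity and right-continuity and then bounding $\Pr(X < F^{-1}(t)) = F(F^{-1}(t)^-) \leq t$, is the standard complete proof, and you correctly identify the atom at $F^{-1}(t)$ as the source of the inequality being $\leq$ rather than $=$.
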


\begin{proposition}
  The expected number of \meps-meaningful clusters in a random single-link hierarchy (i.e. issued from the background model) is lower than \meps.
\end{proposition}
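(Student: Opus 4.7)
The plan is to proceed by the standard a contrario linearity-of-expectation argument, using the fact that the single-link hierarchy always contains exactly $|X|-1$ non-singleton components (Kruskal's algorithm adds $|X|-1$ edges and each one creates a new non-singleton component in the dendrogram). Label these components by the creating edge, so the hierarchy $\mathscr{T}$ has a fixed set of ``slots'' $C_1,\ldots,C_{|X|-1}$, each a random set of vertices. Write
\begin{equation*}
  N_\eps \;=\; \sum_{k=1}^{|X|-1} \mathbb{1}\bigl[\NFA(\mathscr{C}_k) < \eps\bigr]
\end{equation*}
and apply linearity of expectation to reduce the statement to a bound on $\Pr(\NFA(\mathscr{C}_k)<\eps)$ for a single slot.

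Next I would translate the NFA threshold back to a PFA threshold: $\NFA(\mathscr{C}_k)<\eps$ is equivalent to $\PFA(\mathscr{C}_k)<\eps/(|X|-1)$. By Definition~\ref{def:pfaMST} together with Equation~\ref{eq:rankStatistics}, $\PFA(\mathscr{C}_k)$ equals $F_\Omega^{\,K}\!\bigl(\omega_{\max}(\mathscr{C}_k),\omega_{\max}(\father{\mathscr{C}_k})\bigr)^{}$, which, conditional on the sibling partition cell $\omega_{\max}(\father{\mathscr{C}_k})\sim w_F$ and on $K=|E(\mathscr{C}_k)|$, is exactly the conditional CDF of $\omega_{\max}(\mathscr{C}_k)$ evaluated at the random variable $\omega_{\max}(\mathscr{C}_k)$ itself. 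Applying Lemma~\ref{lem:classic} to this conditional distribution yields
\begin{equation*}
  \Pr\bigl(\PFA(\mathscr{C}_k) < t \,\bigm|\, \omega_{\max}(\father{\mathscr{C}_k})\sim w_F,\ K=\kappa \bigr) \;\le\; t
\end{equation*}
for every $t\in(0,1)$ and every conditioning value. Averaging (integrating) over the conditioning then gives the unconditional bound $\Pr(\PFA(\mathscr{C}_k)<t)\le t$.

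Taking $t=\eps/(|X|-1)$ and summing,
\begin{equation*}
  \mathbb{E}[N_\eps] \;=\; \sum_{k=1}^{|X|-1} \Pr\!\left(\PFA(\mathscr{C}_k) < \tfrac{\eps}{|X|-1}\right) \;\le\; (|X|-1)\cdot\tfrac{\eps}{|X|-1} \;=\; \eps,
\end{equation*}
which is the desired conclusion.

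The main subtlety, and the step I would spend most care on, is the conditional application of Lemma~\ref{lem:classic}: one must check that $F_\Omega(\,\cdot\,,w_F)^{K}$ really is the conditional CDF of $\omega_{\max}(\mathscr{C}_k)$ given the events in the conditioning bar, so that the lemma applies verbatim. This is where the naive conditional-independence assumption of Section~\ref{sec:backgroundModel} and the rank-statistics identity (\ref{eq:rankStatistics}) are used; without them, $\PFA$ would not be a bona fide CDF value and the lemma could not be invoked. Everything else (counting $|X|-1$ non-singleton components, linearity of expectation, substituting $t=\eps/(|X|-1)$) is routine.
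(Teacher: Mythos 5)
Your proposal is correct and follows essentially the same route as the paper: linearity of expectation over the $|X|-1$ tested components of the hierarchy, reduction of the NFA threshold to a PFA threshold, and the classical bound $\Pr(F(X)<t)\le t$ of Lemma~\ref{lem:classic}, all resting on the conditional-independence/rank-statistics identity of Equation~\ref{eq:rankStatistics}. The only cosmetic difference is that you apply the lemma once to $\omega_{\max}(\random{C}_k)$, whose conditional CDF is $\mathrm{F}_{\Omega}^{K}$ under the conditional i.i.d.\ assumption, whereas the paper rewrites the event edge by edge, factorizes it by conditional independence, and applies the lemma $K$ times to obtain the same bound $\eps/(|X|-1)$ per component.
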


\begin{proof}
  We follow the scheme of Proposition 1 from the work by Cao~\etal~\cite{cao2005}.
  Let $\Tree$ be random single-link hierarchy. For brevity let $M = |X| - 1$.
  Let $Z_i$ be a binary random variable equal to 1 if the random cluster $C_i \in \Tree$ is meaningful and 0 else.

  Let us denote by $\expectation(X)$ the expectation of a random variable $X$ in the a contrario model.
  We then have
  \begin{equation}
    \expectation \left( \sum_{i=1}^{M} Z_i \right) = \expectation \left( \expectation \left( \sum_{i=1}^{M} Z_i \ |\ M \right) \right) \text{.}
  \end{equation}
    Let $Y_i$ be a binary random variable equal to 1 if
    \begin{equation}
      M \cdot \Pr \Big( \omega_{\max}(\random{C}) < \omega_{\max}(\deterministic{C_i}) \ |\ \omega_{\max}(\father{\random{C}}) \sim \omega_{\max}(\father{{\deterministic{C_i}}}) \Big) < \eps
    \end{equation}
    and 0 else. Of course, $M$ is independent from the sets in $\Tree$. Thus, conditionally to $M = m$, the law of $\sum_{i=1}^{M} Z_i$ is the law of $\sum_{i=1}^{M} Y_i$.
    Let us reprise Equation~\ref{eq:rankStatistics} on p.~\pageref{eq:rankStatistics},
    \begin{multline}
      \Pr \Big( \omega_{\max}(\random{C}) < \omega_{\max}(\deterministic{C_i}) \ |\ \omega_{\max}(\father{\random{C}}) \sim \omega_{\max}(\father{{\deterministic{C}_i}}) \Big) \\
      = \mathrm{F}_{\Omega} \Big(  \omega_{\max}(\deterministic{C_i}),\  \omega_{\max}(\father{\deterministic{{C_i}}}) \Big)^{K_i} .
    \end{multline}
    By linearity of expectation,
  \begin{equation}
    \expectation \left( \sum_{i=1}^{M} Z_i \ |\ M = m \right) = \expectation \left( \sum_{i=1}^{m} Y_i \right) = \sum_{i=1}^{m} \expectation \left( Y_i \right) \text{.}
  \end{equation}
  Let us denote $\mathrm{F}_{\Omega} \Big(  \omega_{\max}(\deterministic{C_i}),\  \omega_{\max}(\father{\deterministic{{C_i}}}) \Big)$ by $\Pr(\deterministic{C_i})$.
  Since $Y_i$ is a Bernoulli variable,
  \begin{multline}
    \expectation(Y_i) = \Pr(Y_i = 1) = 
    \Pr \left( M \cdot \Pr(\deterministic{C_i})^{K_i} < \eps \right) \\
    = \sum_{k=0}^{\infty} \Pr \left( M \cdot \Pr(\deterministic{C_i})^{K_i} < \eps \ \Big|\ K_i = k \right) \cdot \Pr \left( K_i = k \right) \text{.}
    \label{eq:expectationYi}  
  \end{multline}
  We have assumed that $K_i$ is independent from $\Pr(\deterministic{C_i})$.
  Thus, conditionally to $K_i = k$, $\Pr(\deterministic{C_i})^{K_i} = \Pr(\deterministic{C_i})^{k}$.
  We have
  \begin{multline}
      \Pr \left( m \cdot \Pr \Big( \omega_{\max}(\random{C}) < \omega_{\max}(\deterministic{C_i}) \ |\ \omega_{\max}(\father{\random{C}}) \sim \omega_{\max}(\father{{\deterministic{C}_i}}) \Big) < \eps \right) \\
      \shoveleft= \Pr \left( m \cdot \mathrm{F}_{\Omega} \Big(  \omega_{\max}(\deterministic{C_i}),\  \omega_{\max}(\father{\deterministic{{C_i}}}) \Big)^k < \eps \right) \\
      \shoveleft= \Pr \left( m \cdot \Pr \Big( \Omega < \omega_{\max}(\deterministic{C_i}) \ |\ \omega_{\max}(\father{\random{C}}) \sim \omega_{\max}(\father{{\deterministic{C}_i}}) \Big)^k < \eps \right) \\
      \shoveleft= \Pr \left( \Pr \Big( \Omega < \max_{e \in \deterministic{C_i}}\omega(e) \ |\ \omega_{\max}(\father{\random{C}}) \sim \omega_{\max}(\father{{\deterministic{C}_i}}) \Big) < \left( \frac{\eps}{m} \right)^{1/k} \right) \\
      \shoveleft= \Pr \left( \max_{e \in E(\deterministic{C_i})} \Pr \Big( \Omega < \omega(e) \ |\ \omega_{\max}(\father{\random{C}}) \sim \omega_{\max}(\father{{\deterministic{C}_i}}) \Big) < \left( \frac{\eps}{m} \right)^{1/k} \right) \\
      = \prod_{e \in E(\deterministic{C_i})} \Pr \left( \Pr \Big( \Omega < \omega(e)  \ |\ \omega_{\max}(\father{\random{C}}) \sim \omega_{\max}(\father{{\deterministic{C}_i}}) \Big) < \left( \frac{\eps}{m} \right)^{1/k} \right). 
  \end{multline}
  The last equality follows from the conditional independence assumption. Now, using Lemma~\ref{lem:classic} and bearing in mind that the number of edges in $\deterministic{C_i}$ is $K_i = k$, yields
 \begin{equation}
      \Pr \left( m \cdot\Pr(\deterministic{C_i})^{k} < \eps \right)
       \leq \prod_{j = 1}^k \left( \frac{\eps}{m} \right)^{1/k} = \frac{\eps}{m}.
  \end{equation}
  We can now use this bound in Equation~\ref{eq:expectationYi}:
  \begin{multline}
    \expectation(Y_i) = \sum_{k=0}^{\infty} \Pr \left( M \cdot \Pr(\deterministic{C_i})^{K_i} < \eps \ \Big|\ K_i = k \right) \cdot \Pr \left( K_i = k \right) \\
    \leq \frac{\eps}{m} \sum_{k=0}^{\infty} \Pr \left( K_i = k \right) = \frac{\eps}{m} \text{.}
  \end{multline}
  Hence,
  \begin{equation}
    \expectation \left( \sum_{i=1}^{M} Z_i \ |\ M = m \right) = \sum_{i=1}^m \expectation(Y_i) \leq \eps \text{.}
  \end{equation}
  This finally implies $\expectation \left( \sum_{i=1}^{M} Z_i \right) \leq \eps$, what means that the expected number of \meps-meaningful clusters is less than \meps.
\end{proof}

\section*{Acknowledgments}

The authors acknowledge financial support by CNES (R\&T Echantillonnage Irregulier DCT / SI / MO - 2010.001.4673), FREEDOM (ANR07-JCJC-0048-01), Callisto (ANR-09-CORD-003), ECOS Sud U06E01, ARFITEC (07 MATRH) and STIC Amsud (11STIC-01 - MMVPSCV) and the Uruguayan Agency for Research and Innovation (ANII) under grant PR-POS-2008-003.

\bibliographystyle{plain}
\bibliography{mtepper}

\end{document}